\theoremstyle{plain}
\newtheorem{hyp}{Assumption}[section]
\newtheorem{theorem}{Theorem}[section]
\newtheorem{prop}[theorem]{Proposition}
\newtheorem{corollary}[theorem]{Corollary}
\theoremstyle{remark}
\newtheorem{example}[theorem]{Example}
\newtheorem{remark}[theorem]{Remark}
\newtheorem{rem}[theorem]{Remark}
\newlist{propenum}{enumerate}{1} 
\setlist[propenum]{label=(\roman*)}
\newcommand{\cc}{{\mathcal C}}
\newcommand{\ck}{{\mathcal K}}
\newcommand{\cK}{{\mathcal K}}
\newcommand{\cq}{{\mathcal Q}}
\newcommand{\cz}{{\mathcal Z}}
\newcommand{\sparse}{{ s }}
\newcommand{\E}{{\mathbb E}}
\newcommand{\N}{{\mathbb N}}
\renewcommand{\P}{{\mathbb P}}
\newcommand{\R}{{\mathbb R}}
\newcommand{\rd}{{\rm d}}
\newcommand{\inv}[1]{\mathop{\frac{1}{ #1}}\nolimits}
\newcommand{\expp}[1]{\mathop {\mathrm{e}^{ #1}}}
\newcommand{\dT}{\mathfrak{d}_T}
\newcommand{\dI}{\mathfrak{d}_\infty}
\newcommand{\dK}{\mathfrak{d}_\cK}
\newcommand{\DT}{\mathcal{V}_T}
\newcommand{\RT}{\rho_T}
\newcommand{\tD}{\tilde D}
\newcommand{\op}{\mathrm{op}}
\newcommand{\Var}{{\rm Var}}
\newcommand{\norm}[1]{{\left\lVert #1 \right\rVert}}
\newcommand{\idx}{z}
\newcommand{\spi}{\mathcal{Z}}
\begin{document}

\begin{frontmatter}
\title{Simultaneous off-the-grid learning of mixtures issued from a continuous dictionary }
\runtitle{Simultaneous off-the-grid learning of mixtures issued from a continuous dictionary }

\begin{aug}
\author[A]{\inits{C.}\fnms{Cristina}~\snm{Butucea}\ead[label=e1]{cristina.butucea@ensae.fr}}
\author[B]{\inits{J-F.}\fnms{Jean-François}~\snm{Delmas}\ead[label=e2]{jean-francois.delmas@enpc.fr}}
\author[C]{\inits{A.}\fnms{Anne}~\snm{Dutfoy}\ead[label=e3]{anne.dutfoy@edf.fr}}
\author[B,C]{\inits{C.}\fnms{Cl\'ement}~\snm{Hardy}\ead[label=e4]{clement.hardy@enpc.fr}}
\address[A]{CREST, ENSAE, IP Paris, France\printead[presep={,\ }]{e1}}

\address[B]{CERMICS, \'{E}cole des Ponts, France\printead[presep={,\ }]{e2,e4}}
\address[C]{EDF R\&D, Palaiseau, France\printead[presep={,\ }]{e3}}
\end{aug}

\begin{abstract}
In this paper we observe a set, possibly a continuum, of signals corrupted by noise. Each signal is a finite mixture of an unknown number of features belonging to a continuous dictionary. The continuous dictionary is parametrized by a real non-linear parameter. We shall assume that the signals share an underlying structure by assuming that each signal has its active features included in a finite and sparse set. 
We formulate regularized optimization problem to estimate simultaneously the linear coefficients in the mixtures and the non-linear parameters of the features. The optimization problem is composed of a data fidelity term and a $(\ell_1,L^p)$-penalty. We call its solution the Group-Nonlinear-Lasso and provide high probability bounds on the prediction error using certificate functions. Following recent works on the geometry of off-the-grid methods, we show that such functions can be constructed provided the parameters of the active features are pairwise separated by a constant with respect to a Riemannian metric.
When the number of signals is finite and the noise is assumed Gaussian, we give refinements of our results for $p=1$ and $p=2$ using tail bounds on suprema of Gaussian and $\chi^2$ random processes. When $p=2$, our prediction error reaches the rates obtained by the Group-Lasso estimator in the multi-task linear regression model. Furthermore, for $p=2$ these  prediction rates are faster than for $p=1$ when all signals share most of the non-linear parameters. 
\end{abstract}

\begin{keyword}
\kwd{Continuous dictionary}
\kwd{group-nonlinear-lasso}
\kwd{interpolating certificates}
\kwd{mixture model}
\kwd{multi-task learning}
\kwd{non-linear regression model}
\kwd{off-the-grid methods}
\kwd{simultaneous recovery}
\kwd{sparse spike deconvolution}
\end{keyword}

\end{frontmatter}


\section{Introduction}

Observing repeatedly the same process is very frequent nowadays, due to the abundance of data in all fields. Multi-task learning considers the simultaneous analysis of multiple datasets and produces an estimator for each dataset. Datasets can be either discrete-time (e.g. regression models) or continuous-time in our context. We assume that they bring information on the same underlying structure.

We assume each process has a signal-plus-noise structure and that the signal is a mixture of features issued from a dictionary of smooth functions parametrized by some non-linear parameter (such as location, scale, etc.). Such mixtures can be seen e.g. in spectroscopy where each feature corresponds to a chemical component of the analyzed material, see \cite{butucea2021}. 

We are interested in recovering simultaneously the signals, i.e. the linear weights in the mixture and the non-linear parameters of the features, by minimizing a weighted prediction risk penalized by the sum of the total energy of the weights that each feature has through the collection of all processes. The prediction risk may put more weight on prescribed signals of interest. We give high probability bounds on the weighted prediction risk that are analogous to the case of multi-task discrete linear regression models.

\subsection{Model and method}

Let $H_T$ be a Hilbert space  where the
parameter  $T \in  \N$  accounts for  the amount of information in the model. The  Hilbert  space  $H_T$ is  endowed  with  the  scalar  product
$\langle \cdot, \cdot \rangle_T$ and the norm $\norm{\cdot}_T$. 

The observations are a collection of random elements of $H_T$ having a signal-plus-noise structure. The signal  part is a mixture (linear  combination) of at most $K$  smooth
features  $\varphi_T(\theta)$   belonging  to  $H_T$   and  continuously
parametrized by a real parameter  $\theta \in \Theta \subseteq \R$.  For example, consider the standardized Gaussian probability densities with mean $\theta$ or the Cauchy probability densities at location $\theta$.  We denote  by
$(\varphi_T(\theta), \theta \in \Theta)$  the continuous dictionary formed
by  all the  features.   
We consider
features $\varphi_T(\theta)$ that are non degenerate, {\it i.e.} for any $\theta\in \Theta$, $ \norm{\varphi_T (\theta)}_T$ is finite
and non-zero.
Let us define the normalized function $\phi_T(\theta)$ for $\theta\in \Theta$ and its multivariate counterpart $\Phi_T(\vartheta)$ for $\vartheta = \left ( \theta_1,\cdots,\theta_K\right ) \in \Theta^K$ by :
\begin{equation*}
{\phi_{T}(\theta)=\frac{\varphi_T(\theta)}{\norm{\varphi_T(\theta)}_T}} \quad \text{  and } \quad \Phi_{T}(\vartheta)= \begin{pmatrix}
\phi_{T}(\theta_{1})\\
\vdots \\
\phi_{T}(\theta_{K})
\end{pmatrix} \cdot
\end{equation*} 
Let
$(\Omega, \mathcal{G},\P)$  be a  probability space. We note  $W_T$ the
additional noise process defined on this  space and assumed to be almost
surely     an      element     of      $H_T$. 

An observation $Y$ writes:
$$
Y = \sum_{k=1}^K \beta_k^\star \cdot \phi_T(\theta_k^\star) + W_T \quad \text{in } H_T,
$$
where the row vector $\beta^\star = (\beta_1^\star,\ldots,\beta_K^\star)$ is $s-$sparse with non-zero coordinates in the set $S^\star$ (with cardinality $s$) and $\theta_k^\star$ belongs to $\Theta$ for all $k$ in $S^\star$. This can also be written as $Y = \beta^\star \cdot \Phi_T(\vartheta^\star) + W_T$ in $H_T$.

In this paper we consider a collection (either discrete or continuous) of such signals. We assume that all signals share $s$ features from our continuous dictionary. We describe first the discrete case and then the continuous case. We will give a general setup including both cases after the following examples.

\begin{example}[Discrete case]
    Let us assume that the process $Y$ has been observed repeatedly $n$ times. Thus, for $i$ in $\{1,\ldots, n\}$, we observe: 
    $$
    Y(i) = \sum_{k=1}^K B^\star_k(i) \cdot \phi_T(\theta_k^\star) + W_T(i), \quad \text{in }H_T.
    $$
    Let $L_T$ be the set of $H_T$-valued square integrable functions $f$ on $\{1,\ldots,n\}$ with:
    $$
    \|f\|_{L_T}^2:= \frac 1n \sum_{i=1}^n \|f(i)\|_T^2 < \infty.
    $$
    We endow $L_T$ with the scalar product $\langle f,g\rangle_{L_T} := \frac 1n \sum_{i=1}^n \langle f(i), g(i)\rangle_T$, for all $f,\, g$ in $L_T$. Thus, we obtain the Hilbert space $(L_T, \|\cdot \|_{L_T})$.

In our simultaneous analysis of the collection $Y$ of $n$ processes $Y(1),\ldots,Y(n)$, we assume that the matrix $B^\star$ with entries $B^\star_{i k}:= B^\star_k(i)$ has $s-$sparse column structure in the sense that the set:
$$
S^\star = \left\{ k \in \{1,\ldots,K\}: \frac 1n \sum_{i=1}^n \|B^\star_k(i)\|_T^2 \not = 0 \right\}
$$
has size $s$ with $1\leq s <K$. The model can be written:
\begin{equation}
    \label{discreteY}
Y = B^\star \cdot \Phi_T (\vartheta^\star) + W_T, \quad \text{in } L_T,
\end{equation}
where the set $S^\star$ and its size $s$, the vectors $B_k^\star$ and the values $\theta_k^\star$ for $k$ in $S^\star$ are unknown.

    This model generalizes the multi-task regression model (and the Group-Lasso model) to a design matrix whose columns are not fully observed, but are issued from the continuous dictionary of features at unknown values $\theta_k^\star$ for $k$ in $S^\star$. Note also that according to the choice of the Hilbert space $H_T$ we get different non-linear regression models. For example, if $H_T$ is $\mathbb{R}^m$ with its Euclidean norm we get a non-linear matrix regression model with unknown linear parameter $B^\star$ and unknown non-linear parameter $\vartheta^\star$ in the $n \times m$ design matrix $\Phi_T(\vartheta^\star)$. If $H_T$ is the space of square integrable functions on a compact measure set $\mathcal{T}$, we rewrite the model \eqref{discreteY} as the following multivariate functional data regression model,  for $i=1,\ldots,n$ and $t\in \mathcal{T}$:
    $$
    Y(i,t) = B^\star(i) \cdot \Phi_T (\vartheta^\star, t) + W_T(i,t).
    $$

    We may also need for practical reasons to associate to each observed process $Y(i)$ a score indicating, for example, the reliability of the method of acquisition of the observed data. In this context, one can add the information to the model by assigning weights $\nu(i)$ to each process $Y(i)$ and average the prediction risk accordingly. 
    In this context, we define on the space  $\mathcal{Z} = \{1,\ldots,n\}$  the measure $\nu$ and $L_T = L^2(\nu, H_T)$ is the space of $H_T-$valued functions $f$ such that:
    $$
    \|f\|^2_{L_T} := \int_{\mathcal{Z}} \|f(i)\|^2_T \, \rd\nu(i) < \infty.
    $$
    \end{example}

\begin{example}[Continuous case]
Let us assume now that the process $Y$ is observed continuously at $z$ belonging to some set $\mathcal{Z}$:
$$
Y(z) = \sum_{k=1}^K B_k^\star(z) \cdot \phi_T(\theta_k^\star) +W_T(z), \quad \text{in } H_T,
$$
where the set $S^\star$ of indices $k$ such that $B^\star_k$ is non-zero, the values $B_k^\star(z)$ and $\theta_k^\star$ for $k $ in $S^\star$ are unknown. 
Such models are known as "function-on-scalar" models, referring to regression models where the linear coefficients depend on a time or spatial continuous parameter, see \cite{Barber17}.
    
Let $(\spi, \mathcal{F},\nu) $ be any measure space such that $0<\nu(\spi) < + \infty$; we can take $\spi$ as a compact interval of $\R$ and $\nu$ as the Lebesgue measure on $\spi$. Here, $L_T$ denotes the set of $H_T-$valued square  integrable functions $f$ on $\spi$ with:
$$
\|f\|_{L_T}^2: = \int_\spi \|f(z)\|_T^2 \, \rd\nu(z) < \infty.
$$
Again we assume that the functional linear parameters share a sparse structure: the unknown set $S^\star$, which  is then given by $\{k\in \{1, \ldots, K\}\, \colon\, \|B^\star_k\|_{L_T}^2 \neq 0\}$, is sparse with cardinality $s\ll K $.

    Hence, we generalize the ``function-on-scalar'' models that have many applications ({\it e.g.} in genomics, see \cite{Barber17}) by allowing the design matrix to be parametrized. 
\end{example}

\medskip

In all generality, let  $(\spi,\mathcal{F},\nu)$ be  a measure  space with  $\nu$ a  finite
positive non-zero  measure. We consider
the space  $L_T = L^2(\nu, H_T)$,  the set of
$H_T$-valued strongly measurable functions $f$  defined on $(\spi,\mathcal{F},\nu)$ such that
$\norm{f}_{L_T}= \sqrt{\int_\spi \norm{f(z)}_T^2 \, \nu  (\rd z)}$ is finite. We then endow $L_T$ with a scalar product  noted $\left \langle \cdot, \cdot \right \rangle_{L_T}$ defined for any $f,g \in L_T$ by :
\[
\langle f,g \rangle_{L_T}= \int_{\spi} \left \langle f(\idx), g(\idx)\right \rangle_{T} \nu(\rd \idx) .
\]
The norm $\norm{\cdot}_{L_T}$ is the natural norm associated with the scalar product and $(L_T,\norm{\cdot}_{L_T}) $ is an Hilbert space, see \cite[Section IV]{diestel}.
For $p\in [1, +\infty )$, we write $L^p(\nu, \R^K)$ for the space of $\R^K$-valued  measurable function $f$  defined on $(\spi,\mathcal{F},\nu)$ such that
\[
\norm{f}_{L^p(\nu, \R^K)}=\left( \int_\spi \norm{f(z)}_{\ell_2}^p \, \nu  (\rd z) \right)^{\frac{1}{p}}
\]
is finite, where
$\norm{\cdot}_{\ell_2} $ is the usual Euclidean norm  on $\R^K$.
We simply write $L^p(\nu)$ for  $L^p(\nu, \R)$.

We observe a random element $Y$ of
the Hilbert space  $L_T$. We   consider  the   model  with   unknown  parameters   $B^{\star}$  in
$L^2(\nu,\R^{K})$ and $\vartheta^{\star}$ in $\Theta^K$:
\begin{equation}
\label{eq:model}
Y = B^{\star}\Phi_{T}(\vartheta^{\star}) + W_T \quad \text{ in } L_T.
\end{equation}
Here, we assume that the mapping $B^{\star} : \spi \rightarrow \R^K$ is $s-$ sparse that is,
\[
1 \leq s < K \quad\text{with}\quad s = \operatorname{Card}(S^\star) \quad\text{and}\quad S^\star = \{k\in \{1, \ldots, K\}\, \colon\, \norm{B_{k}^{\star}}_{L^2(\nu)} \neq 0 \, \}.
\]
The set $S^\star$ and the parameters $B^\star$ and $\vartheta^\star$ are unknown. Thus, the sparsity $s$ is unknown, but an upper bound $K$ on this value is supposed available. The value $K$ is used as a maximal size of our parameters and to write the optimization problem that we solve here after in order to build estimators, but it does not appear in the rates we obtain later. 
In order to perform signal reconstruction, we are interested in recovering the sparse mapping $B^{\star}$ restricted to its support $S^\star$, that is $B^\star_{S^\star}$, and the associated parameters $\vartheta^\star_{S^\star}$ of the nonlinear parametric functions involved in the mixture model.

We remark that the model \eqref{eq:model} is an extension of the model described in \cite{butucea22}, as the latter amounts to taking $\spi$ a singleton (or $\nu$ a Dirac measure). We gain in generality by letting the measure  $\nu$ be any finite positive non-zero measure on $\mathcal{Z}$, see Section~\ref{sec:contrib}
for further comments. By doing so, the observation $(Y(\idx), \idx \in \spi)$ can be applied {\it e.g.} to longitudinal data and to multiple mixture models.

\medskip

In order to recover the sparse mapping $B^{\star}$ as well as the associated parameters $\vartheta_{S^{\star}}^\star$ (up to a permutation) we solve a regularized optimization problem, that we call Group-Nonlinear-Lasso, with a real tuning parameter $\kappa>0$ and $p \in [1,2]$:

\begin{equation}
\label{eq:generalized_lasso}
(\hat{B},\hat{\vartheta}) \in \underset{B \in
	L^2(\nu,\mathbb{R}^{K}),
	\vartheta \in \Theta_{T}^K}{\text{argmin}} \quad \frac{1}{2 \nu(\spi)} \norm{Y -
	B\Phi_{T}(\vartheta)}_{L_T}^{2} +\kappa \norm{B}_{\ell_1, L^p(\nu)},
\end{equation}
where for $z \mapsto B(z)=(B_1(z), \ldots, B_K(z))$ in $L^2(\nu,\mathbb{R}^{K})$:
\begin{equation*}
\norm{B}_{\ell_1, L^p(\nu)} = \sum\limits_{k=1}^{K}\norm{B_k}_{L^p(\nu)}.
\end{equation*}
The  set  $\Theta_T$  on  which  the  optimization  of  the  non-linear
parameters is  performed is required  to be  a compact interval  and the
function  $\Phi_T$  is continuous.  When  $\mathcal{Z}$  is finite,  the
existence  of  at  least  a   solution  is  therefore  guaranteed for any $p$ in $[1,2]$.  When
$\mathcal{Z}$ is infinite and $p $ in  $(1,2]$, we may use the following
result whose proof (based on the reflexivity of $L^p(\nu)$ which is not valid for $p=1$) is given in Section~E.1 of the supplementary material \cite{butuceaSupplement}.

\begin{prop}
	\label{prop:existence_solution}
	Let $p \in (1,2]$. Assume that the function $\theta \mapsto
	\phi_T(\theta)$ is continuous. Then, the minimization problem
	\eqref{eq:generalized_lasso} over 	$L^2(\spi,\mathbb{R}^{K}) \times
	\Theta_{T}^K$, where $\Theta_T$ is a compact interval of $\R$, admits at
	least one solution. 
\end{prop}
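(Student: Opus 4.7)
The plan is to apply the direct method of the calculus of variations. Evaluating the objective at $(B,\vartheta)=(0,\vartheta_0)$ for any $\vartheta_0\in\Theta_T^K$ gives $\inf F \le \tfrac{1}{2\nu(\spi)}\|Y\|_{L_T}^2<\infty$, so pick a minimizing sequence $(B^n,\vartheta^n)_n$ in $L^2(\nu,\R^K)\times\Theta_T^K$. Non-negativity of the penalty and fidelity together with the uniform bound on $F(B^n,\vartheta^n)$ force $\|B^n_k\|_{L^p(\nu)}$ to be bounded for each $k$ and, by the triangle inequality with $Y$, $\|B^n\Phi_T(\vartheta^n)\|_{L_T}$ to be bounded. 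Three extractions then provide $\vartheta^n\to\vartheta^*$ (compactness of $\Theta_T^K$), $B^n_k\rightharpoonup B^*_k$ in $L^p(\nu)$ (reflexivity of $L^p(\nu)$, valid since $p>1$), and $B^n\Phi_T(\vartheta^n)\rightharpoonup Z$ in the Hilbert space $L_T$ (Banach--Alaoglu), defining $B^*\in L^p(\nu,\R^K)$ and $Z\in L_T$.

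The crux is to identify $Z$ with $B^*\Phi_T(\vartheta^*)$. I would test the $L_T$-weak convergence against elementary tensors $U(\idx)=u(\idx)\,v$ with $u\in L^\infty(\nu)\subseteq L^{p'}(\nu)$ (using the finiteness of $\nu$) and $v\in H_T$, since such tensors are dense in $L_T=L^2(\nu,H_T)$. Bilinearity yields
\[
\langle U,B^n\Phi_T(\vartheta^n)\rangle_{L_T}=\sum_{k=1}^K \langle v,\phi_T(\theta^n_k)\rangle_T\int B^n_k\,u\,\rd\nu,
\]
and each summand converges: $\langle v,\phi_T(\theta^n_k)\rangle_T\to\langle v,\phi_T(\theta^*_k)\rangle_T$ by continuity of $\phi_T$ in $H_T$, while $\int B^n_k u\,\rd\nu\to\int B^*_k u\,\rd\nu$ by the weak $L^p$-convergence of $B^n_k$ paired with the $L^{p'}$-test $u$. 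Equating the resulting limit with $\langle U,Z\rangle_{L_T}=\int u(\idx)\,\langle v,Z(\idx)\rangle_T\,\rd\nu$ for arbitrary $(u,v)$ forces $Z(\idx)=B^*(\idx)\Phi_T(\vartheta^*)$ for $\nu$-a.e. $\idx$, so $B^*\Phi_T(\vartheta^*)=Z\in L_T$.

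To conclude, invoke weak lower semi-continuity: $B\mapsto\sum_k\|B_k\|_{L^p(\nu)}$ is convex and continuous on $L^p(\nu,\R^K)$, hence weakly lsc, and $\|Y-\cdot\|_{L_T}^2$ is weakly lsc on $L_T$; summing yields $F(B^*,\vartheta^*)\le\liminf_n F(B^n,\vartheta^n)=\inf F$. The remaining detail is to check that $B^*$ lies in the admissible set $L^2(\nu,\R^K)$: when $(\phi_T(\theta^*_k))_k$ is linearly independent the pointwise invertibility of its Gram matrix transfers $B^*\Phi_T(\vartheta^*)=Z\in L_T$ directly into $B^*\in L^2(\nu,\R^K)$; in the degenerate case one merges the coordinates of $B^*$ attached to coinciding features, which leaves the fidelity invariant and does not increase the $(\ell_1,L^p)$-penalty by the triangle inequality for $\|\cdot\|_{L^p(\nu)}$, producing an equivalent minimizer in $L^2(\nu,\R^K)$. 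The main obstacle in the whole plan is precisely this identification of the weak $L_T$-limit of $B^n\Phi_T(\vartheta^n)$: it hinges on the hypothesis $p>1$ (for reflexivity of $L^p(\nu)$) and on the normalization $\|\phi_T(\theta)\|_T=1$ together with the continuity of $\phi_T$, which are exactly what is needed to propagate the non-linear parameter $\vartheta$ through the weak convergence of the linear coefficients.
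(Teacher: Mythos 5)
Your proof is correct and follows essentially the same route as the paper's: weak compactness from the reflexivity of $L^p(\nu)$ for $p>1$ together with the compactness of $\Theta_T^K$, weak lower semicontinuity of the fidelity and of the $(\ell_1,L^p)$ penalty (convexity plus strong continuity), and continuity of $\theta\mapsto\phi_T(\theta)$ to pass the non-linear parameters to the limit, with your sequential identification of the weak limit of $B^n\Phi_T(\vartheta^n)$ via elementary tensors being the explicit form of the paper's claim that $(B,\vartheta)\mapsto B\Phi_T(\vartheta)$ is continuous for the weak-times-usual topology. Your closing discussion of membership of the limit in $L^2(\nu,\R^K)$ is an additional detail that the paper's argument does not address, since it simply minimizes over a closed ball of $L^p(\nu,\R^K)$.
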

The estimator $\hat \vartheta$ defined in \eqref{eq:generalized_lasso}  is called off-the-grid as it does not depend on any discretization scheme applied to the parameter space $\Theta$. This approach differs  from previous works in which the parameter space is discretized and the dictionary used to approximate the signals is therefore finite, see \cite{tang2013sparse} in this direction.

In  this paper,  we aim  at quantifying  the quality  of the  prediction of
$B^\star  \Phi(\vartheta^\star)$ by  $\hat B  \Phi(\hat \vartheta)$  for
$\hat B$ and $\hat  \vartheta$ given by \eqref{eq:generalized_lasso}, by
providing an upper bound with high probability of the squared prediction
error:
\begin{equation}
\label{eq:prediction_error}
\hat{R}_T^2 = \frac{1}{\nu(\mathcal{Z})}\norm{B^\star \Phi(\vartheta^\star) - \hat B \Phi(\hat \vartheta)}_{L_T}^2.
\end{equation}

To understand the  normalization by $\nu(\spi)$, consider the previous example of a finite collection of processes:   $\mathcal{Z} = \{1,\cdots,n\}$ and $\nu$ the
	counting  measure  $\sum_{i=1}^n   \delta_i$.  Assume  the  $n$
	observations  belong  to the  Hilbert  space  $H_T=L^2(\lambda)$ for  some
	measure $\lambda$ (either  discrete or continuous) on  the Borel sigma
	field of $\R$. In this case, the squared prediction error becomes:
	\begin{equation}
 \label{eq:RT2=}
	\hat{R}_T^2 = \frac{1}{n} \sum_{i=1}^n \norm{B^\star(i)
		\Phi(\vartheta^\star) - \hat B(i) \Phi(\hat
		\vartheta)}_{L^2(\lambda)}^2. 
	\end{equation}

\subsection{Previous work}
\label{sec:previouswork}

Reconstructing from  observations (that are discrete or continuous-time processes) signals that are linear combinations of features belonging to a continuous dictionary $(\varphi(\theta), \, \theta \in \Theta)$  has applications in many fields such as super-resolution (\cite{candes2014towards}), spike deconvolution (\cite{duval2015exact}), microscopy (\cite{denoyelle2019sliding}) or  spectroscopy (\cite{butucea2021}).

Most often, the  Hilbert space $H_T$, to which  the observations belong,
is assumed to  be of finite dimension and the  dictionary of features is
assumed finite of  size $K$.  Over the past two  decades, the problem of
retrieving  a  sparse  vector  in  the  framework  of  high  dimensional
regression  models ($K  \gg  \operatorname{dim}(H_T)$)  has generated  a
large     number      of     works     (\cite{tibshirani1996regression},
\cite{bickel2009simultaneous},                 \cite{bunea2007sparsity},
\cite{candes2007dantzig},  \cite{buhlmann2011statistics} and  references
therein).     The    celebrated    Lasso   estimator,    popularized   by
\cite{tibshirani1996regression} and  defined by an  optimization problem
composed  of a  data  fidelity  term and  a  $\ell_1$  penalty, has  been
extensively studied  and has  proven to be  efficient. In  addition, its
convex   formulation  makes   its   resolution  easy   to  handle   (see
\cite{Beck2009}     for    a     resolution    via     fast    iterative
shrinkage-thresholding   algorithms).   Prediction  error   bounds   and
estimation  bounds  with   respect  to  the  $\ell_2$   norm  have  been
established  for the  Lasso under  coherence assumptions  on the  finite
dictionary. We refer to \cite{van2009conditions}  for an overview of the
coherence assumptions.  It turns out  that these rates have  been proven
mini-max optimal in \cite{MR2882274}. This means that one cannot find any
estimator that achieves faster rates in expected value  when estimating the worst possible parameters.

The prediction error bounds obtained  for sparse high-dimensional linear
models  encompass the  finite dictionary  setting. We  consider in  this
paper  continuous  dictionaries.  As   a  consequence,  the  problem  of
recontruction is highly non-linear. 

A line of work has emerged around the reconstruction of signals that are mixtures of continuously parametrized  features by solving a regularized minimization problem over a space of measures. Indeed, one can readily notice that a mixture of non-linear features $\sum_{k \in S^\star} \beta_k^\star \phi(\theta_k^\star)$ can be written as the application of the linear functional $\mu \mapsto \int \phi(\theta) \mu(\rd \theta) $ to the atomic measure  $\mu^\star = \sum_{k \in S^\star} \beta_k^\star \delta_{\theta_k^\star}$, where $\delta_x$ denotes a Dirac measure located in $x$.  The Beurling Lasso (or BLasso) introduced in \cite{de2012exact} has proven to be efficient to retrieve a sparse measure from its images through  linear functionals.
We stress that when  $\operatorname{dim}(H_T) < +\infty$, there exists a solution to the BLasso made up of at most ${\rm dim}(H_T)$ Dirac measures. We refer to \cite{boyer2019representer} and \cite{duval21} for proofs of this result. For this reason, the BLasso has been used as a counterpart of the classical Lasso for continuous dictionaries. We remark that when $H_T$ is infinite dimensional the BLasso optimization problem over the space of measures may not have a solution which is an atomic measure.  It makes its solutions difficult to interpret in our context. That is why we prefer in this paper to assume a bound $K$ on the unknown number of features $s$ in order to formulate \eqref{eq:model} and to solve a different optimization problem \eqref{eq:generalized_lasso} producing an atomic measure as a solution. When only one element of $H_T$ is observed (\emph{i.e.} $\spi$ is reduced to a singleton and $\nu$ is a Dirac measure), this formulation is equivalent to that of the BLasso restricted to the set of atomic measures of at most $K$ atoms. Efficient numerical methods to solve this problem are available such as modifications of the Frank-Wolfe algorithm (\cite{denoyelle2019sliding}, \cite{boyd2017alternating}) or the Conic Gradient Particle Descent (\cite{chizat2021sparse}). We stress that these methods proceed by seeking a solution that is atomic.

It has been shown that under the assumption of the existence of certificate functions, the BLasso  retrieves the exact number of features in a small noise regime (\cite{candes2014towards} for a specific dictionary  and \cite{duval2015exact} in a more general framework). Regarding prediction error bounds,  the research  has first focused on mixtures of features issued from a dictionary of complex exponentials parametrized by their frequencies. Much progress has been done in super-resolution using the BLasso with  this specific dictionary, see \cite{candes2014towards}, \cite{candes2013super} in this direction. In \cite{boyer2017adapting}, the authors showed that the prediction error of the BLasso estimator in this specific case almost reached that of the Lasso estimator provided the frequencies are well separated. They adapted previous results  from \cite{bhaskar2013atomic} and \cite{tang2014near} for atomic norm denoising  and they extended them to a more general case where the noise level is unknown and needs to be estimated.
The authors of the present paper considered  in \cite{butucea22} the model \eqref{eq:model} when only one signal is considered ($\spi$ is a singleton and  $\nu$ is a Dirac measure) and showed  that when the one-dimensional non-linear parameters of the features are well separated, one can build estimators that lead to a nearly optimal prediction error bound. By nearly optimal, we mean that the prediction error bound obtained in \cite{butucea22} is of the same order (up to a logarithmic factor) as the minimax bounds obtained in the finite dictionary setting where only linear coefficients are to be retrieved. The result covers a large variety of dictionaries and noises. Let us specify that the separation is expressed with respect to a Riemannian metric following the insightful work of \cite{poon2018geometry}.

\subsection{Contributions}\label{sec:contrib}
We extend the work of \cite{butucea22} to encompass the case of multiple (a discrete or continuous collection of) mixture models.
In this prior work, we studied a method to reconstruct efficiently a single signal and illustrate it for various examples of observation spaces, dictionaries and noise settings. Here, our goal is to reconstruct more generally a set (possibly a continuum) of signals. Of course, when dealing with a finite set of signals, one could reconstruct each signal individually using the method employed in our previous work. However, we show here that the simultaneous reconstruction with $p=2$ outperforms individual reconstruction when  all signals share most of the non-linear parameters. To obtain this enhancement, we introduce an optimization problem with a mixed-norm penalty, develop novel certificates, derive tail bound inequalities for the supremum of $\chi^2$ processes, and substantially expand the proof presented in \cite{butucea22} that only covers the case where the measure $\nu$ is a Dirac distribution.

We let here $\nu$ be any finite positive non-zero measure. 
In the framework of multiple high dimensional linear regressions  $(\ell_1,\ell_p)$-mixed norm penalties have been used to retrieve sparsity patterns among the signals. These penalties influence globally the estimations of the signals $(B(i)\Phi(\vartheta^\star), \,i \in \spi)$. Let us mention the $(\ell_1,\ell_2)$ mixed norm,  used to define the Group-Lasso estimator introduced in \cite{Yuan2006} and that has received significant attention since then (see, \cite{Nardi08}, \cite{Bach08}, \cite{Chesneau08}, \cite{Huang10}). It was shown in \cite{lounici2011oracle} that the reconstruction of signals via the Group-Lasso estimator outperfoms the reconstruction using the Lasso estimator  when the signals share some sparsity pattern. Let us mention the work of \cite{liu2008} that provides consistency results and prediction error convergence rates for the general case $(\ell_1,\ell_p)$ with $p \in [1, +\infty]$. Estimators obtained from regularized problems via mixed norms have been studied in the context of high dimensional multiple linear regression models but little has been done for the non-linear extension considered in \eqref{eq:model}. It is therefore natural to find counterpart estimators for the setting of continuous dictionaries.
Let us highlight the work of \cite{golbabaee2020off} in which an extension of the BLasso has been proposed in order to address multiple mixture models. The authors extended the work of \cite{duval2015exact} to show exact support recovery results in the small noise regime. They used a penalty that is a convex  combination of mixed  norms on measures.  We remark that when applied to atomic measures these  norms reduce to the $(\ell_1,\ell_1)$ and $(\ell_1,\ell_2)$  norms on the weights of the Dirac measures.

In this paper, we prove a high-probability upper bound on the prediction error for estimators issued from an optimization problem regularized by a mixed norm $(\ell_1,L^p(\nu))$ with $p\in [1,2]$ for a wide variety of dictionaries in the general framework where $\nu$ can be any finite positive measure. We give refinements of this result when the noise is assumed Gaussian and when the measure $\nu$ is discrete. These refined bounds on the prediction error use tail bounds on suprema of Gaussian and $\chi^2$ processes.  Our results rely on the existence of certificate functions, see Section \ref{sec:certificates}. We also give sufficient conditions for their construction.

\subsection{Group-Nonlinear-Lasso vs. Group-Lasso on a grid}
Our main objective is to  reconstruct signals that are linear combinations of features continuously parametrized.  This problem has been long handled by discretizing the parameter  space $\Theta$  and using a finite dictionary to approximate the signals as suggested in \cite{tang2013sparse}. In this way, the problem is reduced to a (high-dimensional) linear model which has been extensively studied in the literature.  However,  recent papers have advocated  that taking a finite subfamily of a continuous  dictionary and using a Lasso  estimator to  retrieve  the  linear  coefficients  of the  approximating mixture  lead  to  some issues.  In particular,  the number  of active  features in  the mixture tends to be overestimated, see \cite{duval2017thingrid} in the context of reconstructing a single signal. This phenomenon can also be observed in our more general multi-task setting. 

To illustrate this, we conduct a short numerical experiment. We consider a scenario where we have $n = 100$ noisy signals observed at 100 equally spaced points between -10 and 10; all signals share an underlying structure that consists of two spikes with unknown locations and varying amplitudes (refer to Figure \ref{Fig:signal} for a visual representation of such a signal).
\begin{figure}[!h]
	\centering
	\includegraphics[width=0.4\linewidth]{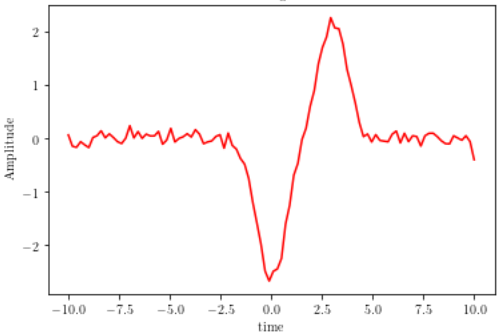}
	\caption{Signal in $H_T = \R^{T}$ with $T=100$, mixture of two Gaussian-shaped spikes with $\theta_1^\star = 0$ and $\theta_2^\star = 3$ and amplitudes in [-10,10] uniformly distributed, corrupted by i.i.d. centered Gaussian r. v. with $\sigma = 0.1$.}
	\label{Fig:signal}
\end{figure}
In Figure \ref{Fig:comparison}, we compare the performance of the Group-Nonlinear-Lasso (with $p=2$ and $K=50$) to that of the Group-Lasso in reconstructing these signals. In the Group-Lasso approach we give three examples of regular grids of non-linear parameters with different grid steps and such that the two spikes are always located at half distance of two consecutive points on the grid. 

The Group-Nonlinear-Lasso outperforms the Group-Lasso in terms of prediction error, regardless of the penalty strength. In addition, the Group-Nonlinear-Lasso accurately identifies the two spikes, while the Group-Lasso approach incorrectly detects four spikes, even when we refine the grid.
 
\begin{figure}[!h]
	\centering
	\begin{minipage}{.4\textwidth}
		\centering
		\includegraphics[width=1\linewidth]{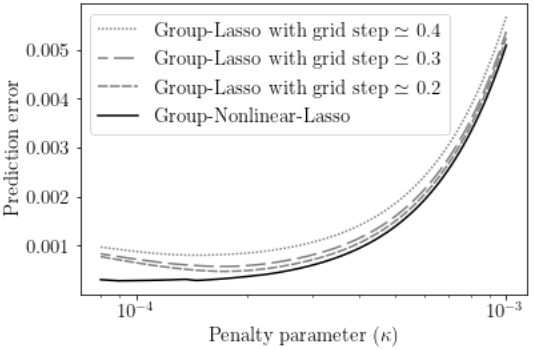}	
	\end{minipage}%
	\begin{minipage}{.39\textwidth}
		\centering
		\includegraphics[width=1\linewidth]{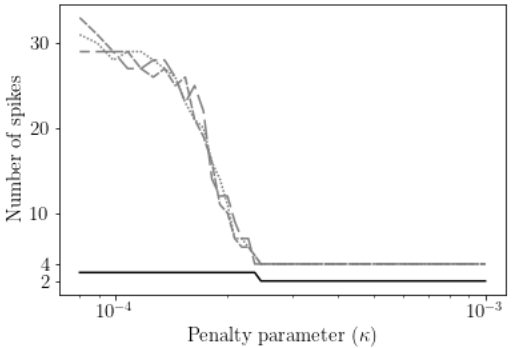}
	\end{minipage}%
	\caption{Prediction error $\hat R_T^2=\norm{Y-\hat Y}_{\ell_2}^2/(n T)$ given in~\eqref{eq:RT2=}, with  $\hat Y$ denoting the reconstructed signals, and  number of spikes obtained with  the Group-Nonlinear-Lasso and the Group-Lasso approaches. These quantities are represented as functions of the penalty parameter $\kappa$. }
	\label{Fig:comparison}
\end{figure}
All the figures contained in this section can be reproduced using the code available online at https://github.com/ClementHardy/PySFW.

\subsection{Organization of the paper and notation}
In Section \ref{sec:assumptions}, we formulate assumptions on the model and set some definitions.  Section \ref{sec:main_result} presents the main results of this paper. We start by giving a high probability upper bound on the prediction error in the general case where the measure $\nu$  can be any finite measure. Then, we give refinements of this result when the measure $\nu$ is a finite weighted sum of Dirac measures and the noise process is assumed Gaussian. In Section \ref{sec:certificates}, we present the assumptions on certificate functions that are used to state the high probability upper bound on the prediction error in Section~\ref{sec:assump_certificates}. We give in Section \ref{sec:construction_certificates} sufficient conditions to construct such functions. Section \ref{sec:proofsmaintheorem} is dedicated to the proof of the high probability upper bound on the prediction error in the most general framework.

{\bf Notation} We shall use for convenience the notation $\lesssim$ and write for two real quantities $a$ and $b$,  $a \lesssim b$ if there exists a positive finite constant $C$ independent of the parameters $s, K, T$ and the measure $\nu$ such that $a \leq C \, b$. 
We also write for two quantities $a,b$ that $a \asymp b$ if $a \lesssim b$ and $b \lesssim a$.

We write $a\wedge b =\min(a,b)$ and $a\vee b=\max(a,b)$.

\section{Assumptions on the model}
\label{sec:assumptions}

\subsection{Regularity and non-degeneracy assumptions on the features}
Let $T \in \N$ be a fixed  parameter. The features $(\varphi_T(\theta), \theta \in \Theta)$ that form a continuous dictionary are elements of the Hilbert space $(H_T, \left \langle \cdot, \cdot \right \rangle_T)$. We shall integrate and differentiate those features with respect to their one-dimensional parameter belonging to the interval $\Theta$ of $\R$. To do so, we shall use the notions of Bochner integral and Fréchet derivative. 
We  recall that for any function $f: \Theta \mapsto H_T$  differentiable    at
$\theta\in \Theta$, we have for all $g\in H_T$ that:
\begin{equation*}
\label{eq:deriv}
\partial_{\theta}\left \langle f(\theta),  g\right \rangle_T =
\left \langle \partial_\theta f(\theta),  g \right \rangle_T.
\end{equation*}
In addition, if $f$ is  Bochner integrable on $\Theta$,  then for all $g\in  H_T$, we have
that:
\begin{equation*}
\label{eq:integ}
\int_\Theta \langle f(\theta), g \rangle_T \, \rd \theta=  \langle
\int_\Theta  f(\theta)\,\rd \theta, g \rangle_T. 
\end{equation*}
We shall require the features to satisfy the following regularity assumption.
\begin{hyp}[Smoothness of $\varphi_T$]
	\label{hyp:reg-f} 
	We assume that the function $\varphi_T: \Theta \rightarrow H_T$
	is of class    $\cc^3$ and
	$\norm{\varphi_T   (\theta)}_T    >   0$   on    $\Theta$.
\end{hyp}
Assume that Assumption \ref{hyp:reg-f} holds. Recall that $\phi_{T}(\theta)= \varphi_T(\theta)/\norm{\varphi_T(\theta)}_T$ for all $\theta \in \Theta$. We define the continuous function:
\begin{equation}
\label{def:g_T}
g_T(\theta)= \norm{\partial_\theta
	\phi_T(\theta)}_T^2.
\end{equation}
It will be convenient to assume the non-degeneracy of the function $g_T$.
\begin{hyp}[Positivity of $g_T$]
	\label{hyp:g>0}
	Assumption \ref{hyp:reg-f} holds and we have	$g_T>0$ on $ \Theta$.
\end{hyp}

One can easily show that features are non-degenerate by checking that for any $\theta \in \Theta$ the elements 
$\varphi_T(\theta)$  and $\partial_\theta\varphi_T(\theta)$ of $H_T$  are
linearly   independent, see \cite[Lemma 3.1]{butucea22} in this direction.

\subsection{The kernel and its Riemannian derivatives}
\label{sec:riemann}
In this section, we introduce a function on $\Theta^2$, called kernel, that will quantify the correlation between two features in the dictionary. We shall derive from this kernel a Riemannian metric on the parameter space $\Theta$ following \cite{poon2018geometry}. This metric  will be in particular invariant to a reparametrization of the parameter space.

\subsubsection{Kernel space and associated Riemannian metric}

We call kernel  a real-valued
function defined on $\Theta^2$. Let $\cK$ be a symmetric kernel of class
$\cc^2$ such  that the  function $g_\cK$  defined on the one-dimensional and
connected set
$\Theta$ by:
\begin{equation}
\label{eq:def-gK}
g_\cK(\theta)= \partial^2_{x,y}
\cK( \theta,\theta)
\end{equation}
is positive and locally  bounded, where $\partial_x$
(resp. $\partial_y$) denotes the usual  derivative with respect to the
first (resp. second) variable. 

We derive from the kernel $\ck$ the  metric $\dK(\theta,\theta')$   between
$\theta, \theta'\in \Theta$ by:
\begin{equation}
\label{eq:def-Riemann-dist-v2}
\dK(\theta,\theta')=  |G_\cK(\theta) -G_\cK(\theta')|,
\end{equation}
where  $G_\cK$ is a primitive of $\sqrt{g_\cK}$. 

We need to differentiate the kernel $\ck$ on the manifold $(\Theta,g_\ck)$. We use the covariant derivatives  that generalize the classical directional derivative of vector fields on a manifold. Since we only consider the case of a one-dimensional parameter space, the covariant derivatives reduce to simple expressions.

For a real-valued function $F$ defined on $\Theta^2$, we say that $F$ is
of class $\cc^{0,  0}$ on $\Theta^2$ if it is  continuous on $\Theta^2$,
and of class $\cc^{i, j}$ on $\Theta^2$,  with $i, j\in \N$, as soon as:
$F$  is of  class  $\cc^{0, 0}$,  and  if $i\geq  1$  then the  function
$\theta\mapsto F(\theta, \theta')$  is of class $\cc^i$  on $\Theta$ and
its derivative $\partial_x F$ is  of class $\cc^{i-1, j}$ on $\Theta^2$,
and if $j\geq 1$ the function $\theta' \mapsto F(\theta, \theta')$ is of
class $\cc^j$ on $\Theta$ and its  derivative $\partial_y F$ is of class
$\cc^{i, j-1}$ on $\Theta^2$.
For a  real-valued  symmetric function $F$ defined
on $\Theta^2$ of class $\cc^{i,j}$, we define the
covariant  derivatives $D_{i,j;\cK}[F]$ of
order  $(i,j)\in \N^2$    recursively by  $D_{0,0;\cK}[F] =  F$ and  for
$i,j\in \N$, assuming that $g_\cK$ is of class $\cc^{i\vee j}$, and $\theta, \theta'\in \Theta$:
\begin{equation}
\label{eq:def-cov-deriv-2}
D_{i+1,j;\cK}[F](\theta,\theta')
= g_\cK(\theta)^{\frac{i}{2}}\partial_{\theta} \left(
\frac{D_{i,j;\cK}[F](\theta, \theta')
}{g_\cK(\theta)^{\frac{i}{2}}}\right)
\quad\text{and}\quad
D_{i,j;\cK}[F](\theta, \theta')=D_{j,i;\cK}[F](\theta', \theta). 
\end{equation}
In particular, we have  $D_{1, 0; \cK}=\partial_x
F$, $D_{0, 1; \cK}=\partial_y
F$ and $D_{1,1; \cK}=\partial^2_{xy}  F$. 
We shall also consider the following modification of the covariant
derivative, for $i,j\in \N$:
\begin{equation}
\label{eq:def-tD2}
\tD_{i,j;\cK}[F](\theta, \theta')=\frac{D_{i,j; \cK}[F](\theta,
	\theta')}{g_\cK(\theta)^{i/2}\, g_\cK(\theta')^{j/2} }\cdot
\end{equation}
We have $\tD_{1,0; \cK} \circ \tD_{0,1; \cK} =\tD_{0,1; \cK} \circ
\tD_{1,0; \cK} $ and  for
$i,j\in \N$, assuming that $g_\cK$ is of class $\cc^{i \vee j}$:
\begin{equation*}
\tD_{i,j; \cK}=\left(\tD_{1,0; \cK}\right)^i\circ \left(\tD_{0,1; \cK}\right)^j.
\end{equation*}
The definitions of covariant derivatives and their modifications cover the case of 1-dimensional functions defined on $\Theta$. For any smooth function $f$ defined on $\Theta$, we shall note $D_{i;\cK}[f]$ (resp. $\tilde D_{i;\cK}[f]$) for $D_{i,0;\cK}[F]$ (resp. $\tilde D_{i,0;\cK}[F]$) where  $F:(\theta,\theta') \mapsto f(\theta)$.

For $i, j\in \N$, if $\cK$  is of class $\cc^{i\vee 1,j\vee 1}$, then we consider
the real-valued function defined on $\Theta^2$ by:
\begin{equation}
\label{eq:def-deriv-K}
\cK^{[i, j]}=\tD_{i,j;\cK}[\cK].
\end{equation}
In particular, when $\cK$ is of class $\cc^2$, we have:
\begin{equation}
\label{eq:K-g}
\cK^{[0, 0]}=\cK 
\quad\text{and}\quad
\cK^{[1, 1]}(\theta, \theta)=1.
\end{equation}

\subsubsection{The kernel associated to the dictionary of features}
Let $T\in  \N$ be fixed and assume that Assumption~\ref{hyp:g>0}
holds. We associate to the dictionary of features $(\varphi_T(\theta), \theta \in \Theta)$ a kernel $\cK_T$ on $\Theta^2$ defined by: 
\begin{equation}
\label{eq:def-KT}
\cK_T(\theta, \theta')=\langle \phi_{T}(\theta), \phi_{T}(\theta')
\rangle_T =\frac{\langle \varphi_T(\theta), \varphi_T(\theta')
	\rangle_T}{\norm{\varphi_T (\theta)}_T\norm{\varphi_T (\theta')}_T}\cdot
\end{equation}

In the following, for an expression $A$ we will often replace   $A_{\cK_*}$ by $A_*$ where
$*$ is  $T$ or $\infty$.

We remark that under Assumptions~\ref{hyp:reg-f} and~\ref{hyp:g>0} the definitions \eqref{def:g_T} and~\eqref{eq:def-gK} are consistent by \cite[Lemma 4.3]{butucea22}. Furthermore,  we have that the kernel  $\cK_T$ is  of
class $\cc^{3,3}$ on $\Theta^2$ and for $i, j\in \{0, \ldots, 3\}$ and for any
$\theta, \theta'\in \Theta$:
\begin{equation}
\label{def:derivatives_kernel}
\cK_T^{[i,j]}(\theta,\theta') = \langle
\tD_{i;T}[\phi_{T}](\theta), \tD_{j;T}[\phi_{T}](\theta')
\rangle_T,
\end{equation}
\begin{equation}
\label{eq:formula_K_00}
\sup_{\Theta^2} |\cK_T^{[0,0]}|\leq 1,
\quad  \cK_T^{[0,0]}(\theta,\theta)=1,
\quad   \cK_T^{[1,0]}(\theta,\theta) = 0, \quad 
\cK_T^{[2,0]}(\theta,\theta) = -1 \quad \text{and} \quad
\cK_T^{[2,1]}(\theta,\theta) = 0. 
\end{equation}

In practice, the kernel $\ck_T$ may be difficult to handle. It might be convenient to approximate $\ck_T$ by a kernel $\ck_\infty$ for which some assumptions will be easier to check. 
We give necessary conditions that an approximating kernel $\ck_\infty$ must verify. Then we define a quantity measuring the precision of the approximation of $\ck_T$ by $\ck_\infty$ over some compact set $\Theta_T \subseteq \Theta$.

Let us first define for a kernel $\cK$  of class $\cc^{3,3}$ the function on $\Theta$:
\begin{equation}
\label{eq:def-h_K}
h_\cK(\theta)=\cK^{[3,3]}(\theta, \theta).
\end{equation}
We also and simply write for a real-valued function $f$ on $\Theta$ of class $\cc^i$:
\[
f^{[i]}= \tilde D_{i;T}[f].
\]

The following assumption gathers the conditions that an approximating kernel $\ck_\infty$ must sastify.
\begin{hyp}[Necessary conditions on the asymptotic kernel $\cK_\infty$]
	\label{hyp:Theta_infini}
	The symmetric kernel $\cK_\infty $ defined on $\Theta^2$ is of 
	class $\cc^{3,3}$,   the  function
	$g_\infty $  defined by~\eqref{eq:def-gK} on $\Theta$  is positive and
	locally bounded (as well as of class $\cc^2$), and we have $\cK_\infty (\theta,\theta)=- \ck_\infty
	^{[2, 0]}(\theta, \theta)=1$ for $\theta\in \Theta$. The set  $\Theta_\infty \subseteq \Theta$ is  an
	interval and we have:
	\begin{equation}
	\label{def:M_h_M_g}
	m_g:= \inf_{\Theta_\infty } \, g_{\infty} > 0, \quad  L_3:=
	\sup_{\Theta_\infty } \, h_{\infty} < + \infty,
	\quad\text{and}\quad	L_{i,j}: = \sup_{\Theta_{\infty}^2} |\cK_\infty ^{[i, j]}|
	<+\infty
	\quad\text{for all  $ i,j \in \{0, 1, 2\}$}.
	\end{equation}
\end{hyp}
We stress that the interval $\Theta_\infty$ is possibly unbounded contrary to the set $\Theta_T$ which is compact.
\medskip 

Under assumption \ref{hyp:Theta_infini}, we derive from the kernel $\ck_\infty$ the Riemannian metric $\dI$ as in \eqref{eq:def-Riemann-dist-v2}.
One can show that  the metrics $\dT$ and $\dI$ are strongly equivalent on the compact set $\Theta_T^2$. Indeed, we have:
\begin{equation}
\label{eq:equi-dT-dI}
  \dI / \RT\leq  \dT \leq  \RT\,  \dI,
\end{equation}
where $\rho_T$ is a finite positive constant defined by:
\begin{equation}
\label{eq:def-rho}
\RT=\max \left(\sup_{\Theta_T} \sqrt{\frac{g_T}{g_\infty
}},\sup_{\Theta_T} \sqrt{\frac{g_\infty }{g_T }} \right). 
\end{equation}

We then give an assumption on the quality of approximation of $\cK_T$ by
$\cK_\infty $.
We set:
\begin{equation}
\label{def:V_1}
\DT=\max( \DT^{(1)}, \DT^{(2)})
\quad\text{with}\quad
\DT^{(1)}=\max_{i,j\in \{0, 1, 2\} }\, \sup_{\Theta_T^2} |
\cK_T^{[i,j]} - \cK_\infty ^{[i,j]}|
\quad\text{and}\quad
\DT^{(2)}=\sup_{\Theta_T} |h_T - h_\infty |. 
\end{equation}
\begin{hyp}[Quality of the approximation]
	\label{hyp:close_limit_setting}
	Let $T \in \N$ be  fixed. Assumptions~\ref{hyp:g>0} 
	and~\ref{hyp:Theta_infini} hold, the interval  $\Theta_T\subset
	\Theta_\infty $ is a compact interval,  and  we have:
	\[
	\DT\leq   L_{2,2} \wedge L_3.
	\]
\end{hyp}

\section{Main results}
\label{sec:main_result}

\subsection{General bound on the prediction error}
The main goal of this paper is to bound the prediction error \eqref{eq:prediction_error} associated to the estimators defined in \eqref{eq:generalized_lasso}.
We first give a bound that holds with a controled probability in the general case where the penalty of the optimization problem \eqref{eq:generalized_lasso} is the norm $\norm{\cdot}_{\ell_1,L^p(\nu)}$ with $p \in [1,2]$. The bound is expressed as a function of the tuning parameter $\kappa$, the sparsity $s$, the mass of the measure $\nu$ and the parameter of the penalty $p$. 
It stands on an event whose probability is bounded from below by tails of distributions of random variables defined by taking the supremum over the compact set $\Theta_T$ and the norm $\norm{\cdot}_{L^q(\nu)}$ of real-valued processes indexed on $ \spi \times \Theta_T$ of the form:
\[
X(\idx,\theta) = \left
\langle W_T(\idx), g(\theta) \right \rangle_T,
\] 
for some smooth functions $g : \Theta_T \rightarrow H_T$ related to the dictionary of features and  where $q$ is the conjugate of $p$ in the sense that $1/q +1/p = 1$.

The assumptions on the regularity of the dictionary, the regularity of the limit kernel and the proximity  to the limit kernel are the same as those from  \cite[Theorem 2.1]{butucea22}. Regarding the noise, we only require that it belongs almost surely to $L^q(\nu,H_T)$. We highlight that the Theorem below is proven under the existence of certificate functions. Those certificates generalize those of \cite[Theorem 2.1]{butucea22}. (In particular, they reduce to those in \cite{butucea22} when $\nu$ is a Dirac measure.) A construction of certificates has been proposed in \cite{golbabaee2020off} for the case where $\nu$ is the counting measure. Our construction is slightly different and covers the general case where $\nu$ can be any finite positive measure, see Remark D.4. of the supplementary material. We shall give in Section \ref{sec:construction_certificates} sufficient conditions for their existence. For all $\idx \in \spi$, we note $\cq^\star(\idx)$  the  finite set  of the
parameters of  the active  features appearing  in $Y(\idx)$. We assume  that the unknown
number of  active features $s$  in the observation  $Y$ is bounded  by a
constant                  $K$,                  that                  is:
\begin{equation*}
\label{eq:Q}
K  \geq  \operatorname{Card}(\bigcup_{\idx \in  \spi}  \cq^\star(\idx))
:=s.
\end{equation*}
In the following we make a slight abuse of notation by writing $\cq^\star$ instead of $\bigcup_{\idx \in  \spi}  \cq^\star(\idx)$.

It turns out that we can construct such certificates provided the elements of the set $\cq^\star$ defined above are pairwise separated with respect to a Riemannian metric. We remark that the separation does not depend on the space $(\spi, \mathcal{F},\nu)$. In particular, in the example where $\spi$ is a finite set of cardinality $n$, increasing $n$ does not improve or deteriorate the separation.

We state the main result of this paper that is proved in Section \ref{sec:proofsmaintheorem}.
\begin{theorem}
	\label{maintheorem} Let $T \in \N$. Let be $p \in [1,2]$ and $q \in [2,+\infty]$ such that $1/p+1/q=1$. When $p=1$, we assume that $\spi$ is finite.
	Assume we observe the random element $Y$ of $L_T$ under the regression model (\ref{eq:model}) with a noise $W_T$ belonging to $L^q(\nu,H_T)$   almost surely and unknown parameters  $B^\star \in L^2(\nu,\R^K)$ and $\vartheta^\star= \left (
	\theta_1^\star,\cdots,\theta_K^\star\right )$ a vector with entries in
	$ \Theta_T$ (compact interval of $\R$). Let us suppose that the following assumptions hold :
	\begin{propenum}

		\item\textbf{Regularity           of            the           dictionary
			$\varphi_T$:}\label{hyp:reg_dic_theorem}  The   dictionary  function
		$\varphi_T$     satisfies     the     smoothness     conditions 
		\ref{hyp:reg-f} .    The    function   $g_T$ satisfies the positivity condition		\ref{hyp:g>0}. 
		\item\textbf{Regularity of the limit kernel:}
		The   kernel   $\ck_{\infty}$ and the functions  $ g_{\infty}$   and
		$h_{\infty}$,   defined  on an interval $\Theta_\infty \subset
		\Theta$, satisfy  the  smoothness  conditions  of  Assumption
		\ref{hyp:Theta_infini}.
		\item\textbf{Proximity to the limit kernel:} \label{hyp:V_T_theorem} The
		kernel $\cK_T$ defined from  the dictionary is
		sufficiently close to the limit  kernel $\cK_\infty$ in the sense that
		Assumption \ref{hyp:close_limit_setting} holds.
		\item\textbf{Existence of  certificates:}\label{hyp:existence_certificate_theorem} The non-empty set of  unknown parameters
		$\cq^\star= \{\theta^\star_k,  \, k \in S^\star\}  $, with
		$S^\star = \{k, \, \norm{B_{k}^{\star}}_{L^2(\nu)} \neq 0 \}$,  satisfies
		Assumptions  \ref{assumption1} and  \ref{assumption2}  with the  same
		$r >0$.
	\end{propenum}
	Then, there exist  finite positive constants $\mathcal{C}$, $\mathcal{C}_0$  
	depending on $r$ and on the kernel $\cK_\infty$ defined on $\Theta_\infty$ such that
	we have the prediction error bound of the estimators $\hat B $ and $\hat{\vartheta}$ defined for a tuning parameter $\kappa > 0$ (in
	\eqref{eq:generalized_lasso})  given by:
	\begin{equation}
	\label{eq:main_theorem}
	\begin{aligned}
	 \frac{1}{\nu(\spi)}\norm{\hat B \Phi_{T}(\hat{\vartheta}) -
		B^{\star}\Phi_{T}(\vartheta^{\star}) }_{L_T}^2
	&\leq  \mathcal{C}_0 \,  \sparse  \, \nu(\spi)^{\frac{2}{p}}\, \kappa^2,
	\end{aligned}
	\end{equation}
	with  probability larger than 	
	\begin{equation}
	\label{eq:proba_maintheorem}
	1 - \sum_{i=0}^2 \mathbb{P} \left( \frac 1{\nu(\spi)} M_{i} >  \mathcal{C} \, \kappa \,  \right),
	\end{equation} 
	where $M_i$ is defined by:
	\begin{equation}
	\label{def:M}
	M_{i} = \underset{\theta \in \Theta_T}{\sup}\norm{\left
		\langle W_T, \phi_{T}^{[i]}(\theta) \right \rangle_T }_{{L^q(\nu)}}, \quad \text{for } i=0,1,2.
	\end{equation}	
\end{theorem}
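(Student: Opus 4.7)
My plan follows the standard Lasso-style strategy adapted to the $(\ell_1, L^p(\nu))$ off-the-grid setting. Writing $\Delta := \hat B\,\Phi_T(\hat\vartheta) - B^\star\,\Phi_T(\vartheta^\star)$, the minimality of $(\hat B,\hat\vartheta)$ in \eqref{eq:generalized_lasso}, combined with $Y = B^\star\Phi_T(\vartheta^\star) + W_T$ and an expansion of the squared data term, yields the basic inequality
\begin{equation*}
\frac{1}{2\nu(\spi)}\,\|\Delta\|_{L_T}^2 \;\leq\; \frac{1}{\nu(\spi)}\,\langle W_T,\Delta\rangle_{L_T} \;+\; \kappa\big(\|B^\star\|_{\ell_1, L^p(\nu)} - \|\hat B\|_{\ell_1, L^p(\nu)}\big).
\end{equation*}
Everything then reduces to controlling the stochastic inner product on the high-probability event $\ce := \bigcap_{i=0}^2 \{M_i \leq \mathcal C\,\kappa\,\nu(\spi)\}$ so that it can be absorbed into the penalty difference plus a fraction of $\|\Delta\|_{L_T}^2/\nu(\spi)$.

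The central step is a near/far decomposition of the estimated atoms $\hat\theta_1,\dots,\hat\theta_K$ relative to $\cq^\star$, with separation measured by the Riemannian metric $\dI$ (and transferred to $\dT$ via \eqref{eq:equi-dT-dI}). For each $k$ such that $\hat\theta_k$ lies within a fixed fraction of the separation radius $r$ of some $\theta_{j(k)}^\star$, I would Taylor-expand $\phi_T(\hat\theta_k)$ to second order around $\theta_{j(k)}^\star$; the scalar maps $z \mapsto \langle W_T(z), \phi_T^{[i]}(\theta_{j(k)}^\star)\rangle_T$ for $i=0,1$ and $z \mapsto \langle W_T(z), \phi_T^{[2]}(\tilde\theta)\rangle_T$ at an intermediate point have $L^q(\nu)$-norms uniformly controlled on $\Theta_T$ by $M_0, M_1, M_2$ from \eqref{def:M}. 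For atoms that lie far from every $\theta_j^\star$, and to produce the crucial sign cancellation on $S^\star$, I invoke the certificate functions: for an admissible sign pattern $V$ in the $(\ell_\infty, L^q(\nu))$-dual of the penalty adapted to $B^\star$, Assumptions~\ref{assumption1} and~\ref{assumption2} provide a vector field $\eta_V : \Theta_T \to L^q(\nu,\R^K)$ interpolating $V$ at $\cq^\star$, bounded by $1$ globally, and strictly contracting with explicit quadratic decay in $\dI$ off $\cq^\star$. Pairing $\eta_V$ with the representation of $\langle W_T, \Delta\rangle_{L_T}$ makes the main linear contribution cancel exactly the $k\in S^\star$ part of the penalty difference, while the off-$\cq^\star$ decay absorbs the far-atom contributions into $\kappa\sum_{k\notin S^\star}\|\hat B_k\|_{L^p(\nu)}$.

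On $\ce$, combining both steps produces a cone-type inequality
\begin{equation*}
\tfrac{1}{2\nu(\spi)}\|\Delta\|_{L_T}^2 + c_1\,\kappa\sum_{k\notin S^\star}\|\hat B_k\|_{L^p(\nu)} \;\leq\; c_2\,\kappa\sum_{k\in S^\star}\|\hat B_k - B^\star_k\|_{L^p(\nu)} + \tfrac{\alpha}{\nu(\spi)}\|\Delta\|_{L_T}^2,
\end{equation*}
with $\alpha < 1/2$ for a suitable near/far threshold chosen in terms of the geometric constants $m_g, L_3, L_{i,j}$ from \eqref{def:M_h_M_g} and $\RT, \DT$ from \eqref{eq:def-rho}--\eqref{def:V_1}. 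After absorbing the last term on the left, I apply Cauchy--Schwarz on $S^\star$ (which contributes the factor $\sqrt s$) and transfer $\|\hat B_k - B^\star_k\|_{L^p(\nu)}$ back to $\|\Delta\|_{L_T}$ via the near-orthogonality of $\Phi_T$ at the separated $\theta_k^\star$; the conversion picks up the factor $\nu(\spi)^{1/p - 1/2}$ from the embedding $L^2(\nu)\subset L^p(\nu)$ combined with the $\sqrt{\nu(\spi)}$ scaling of the $L_T$-norm. Solving the resulting quadratic inequality in $\|\Delta\|_{L_T}/\sqrt{\nu(\spi)}$ yields the announced bound $\mathcal C_0\,\sqrt s\,\nu(\spi)^{1/p}\,\kappa$.

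The principal difficulty will be the explicit construction and quantitative use of the vector-valued certificate $\eta_V$ in the $(\ell_\infty, L^q(\nu))$ dual framework, and the careful tracking of how $r$, $m_g$, $L_{i,j}$, $L_3$, $\RT$ and $\DT$ enter both the strict-contraction constant $c_1$ and the Taylor/certificate remainder $\alpha$, so that $\alpha < 1/2$ is actually achieved and the final constants $\mathcal C$, $\mathcal C_0$ depend only on $r$ and on the kernel $\cK_\infty$ (and not on $s$, $K$, $T$, or $\nu$), matching the statement of the theorem.
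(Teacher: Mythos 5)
Your skeleton (basic inequality, the event $\bigcap_{i=0}^2\{M_i\le \mathcal C\kappa\nu(\spi)\}$, the near/far split of the $\hat\theta_\ell$ at radius $r$, the second-order Riemannian Taylor expansion controlled by $M_0,M_1,M_2$, and certificates to cancel the on-support part of the penalty and to beat the far region) coincides with the paper's strategy. But the closing step of your plan contains a genuine gap: you propose to convert $\sum_{k\in S^\star}\norm{\hat B_k-B^\star_k}_{L^p(\nu)}$ back into $\norm{\Delta}_{L_T}$ ``via the near-orthogonality of $\Phi_T$ at the separated $\theta_k^\star$''. This is a compatibility/restricted-eigenvalue argument, and it is not available here. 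First, there is no canonical index matching: the relevant coefficient errors are the aggregated quantities $I_{0,k}=\sum_{\ell\in\tilde S_k(r)}\hat B_\ell-B^\star_k$ (several estimated atoms may sit in one ball), not coordinatewise differences $\hat B_k-B^\star_k$. Second, $\Delta$ is built on the \emph{estimated} atoms $\phi_T(\hat\theta_\ell)$, which may be up to $K$ in number (with $K$ arbitrary) and arbitrarily close to one another and to the $\theta^\star_k$, so no lower bound on the Gram matrix of the involved features holds uniformly; separation is only assumed for $\cq^\star$. The same objection applies to your claim that the Taylor/certificate remainders can be absorbed into $\alpha\norm{\Delta}_{L_T}^2/\nu(\spi)$ with $\alpha<1/2$: the quadratic term $I_2=\sum_k\sum_{\ell\in\tilde S_k(r)}\norm{\hat B_\ell}_{L^p(\nu)}\dT(\hat\theta_\ell,\theta_k^\star)^2$ is not comparable to $\norm{\Delta}_{L_T}^2$ without precisely such a degenerate lower bound.

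The paper never lower-bounds anything of this kind. It controls $I_2$ and the far-region mass $I_3$ by sandwiching a Bregman divergence: the certificate $P_0$ of Assumption~\ref{assumption1} associated to $V(\idx,\theta^\star_k)=\operatorname{sign}(B^\star_k(\idx))|B^\star_k(\idx)|^{p-1}/\norm{B^\star_k}_{L^p(\nu)}^{p-1}$ gives the lower bound $D_B\ge C_N I_2+C_F I_3$ (Lemma~\ref{lower_bound_bregman}), while the optimality inequality plus the noise bound $|\hat\Upsilon(W_T)|\le (I_0+I_3)M_0+I_1M_1+I_2M_2/2$ gives the matching upper bound (Lemma~\ref{lem:upper_bound_bregman}); two further certificates, $P_1$ for $I_0$ (Lemma~\ref{boundI0}) and the derivative certificate $Q_0$ of Assumption~\ref{assumption2} for the first-order term $I_1$ (Lemma~\ref{boundI1}), close the system of inequalities on the event $\mathcal A$. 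The factor $\sqrt{s}\,\nu(\spi)^{1/p}$ then comes solely from Cauchy--Schwarz, $|\hat\Upsilon(f)|\le \hat R_T\sqrt{\nu(\spi)}\norm{f}_{L_T}$, combined with the certificate norm bounds $\norm{P_0}_{L_T},\norm{P_1}_{L_T},\norm{Q_0}_{L_T}\lesssim\sqrt{s}\,\nu(\spi)^{1/2p-1/2q}$ from Assumptions~\ref{assumption1}~\ref{it:norm<c} and~\ref{assumption2}~\ref{it:as2-<c}, and dividing once by $\hat R_T$; no embedding $L^2(\nu)\subset L^p(\nu)$ step or Gram control over $S^\star$ is needed. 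To repair your argument you should replace the ``near-orthogonality transfer'' and the $\alpha$-absorption by this Bregman sandwich together with the explicit use of the derivative certificate for $I_1$ and the $L_T$-norm bounds on the certificates.
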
 
We show in Section \ref{sec:bounding_M} below that the random variables $M_i$ can be bounded explicitly with high probability when a finite number of signals is observed and the noise (assumed Gaussian) satisfies Assumption \ref{hyp:bruit}. Giving explicit bounds in the case where an infinite number of signals (possibly a continuum) is observed is beyond the scope of this paper and could be an avenue for future work.
\begin{remark}[On the choice of $\kappa$]
	We typically choose $\kappa$ in \eqref{eq:main_theorem} as small as possible giving a global bound on the prediction risk small, such that the event on which the bound stands occurs with a sufficiently large probability.
\end{remark}

\begin{remark}[On the dimension $K$]
	The bound $K$ on the sparsity $s$ appears neither in the upper bound on the prediction error \eqref{eq:main_theorem} nor in the lower bound on the probability \eqref{eq:proba_maintheorem}. Thus, it can be taken arbitrarily large. This was already the case in \cite{butucea22} where $\spi$ is a singleton and $\nu$ is a Dirac measure, see Remark 2.4 therein.
\end{remark}

\subsection{Explicit bounds for Gaussian noise and finite number of signals}
\label{sec:bounding_M}
It is not straightforward to establish tail bounds for the random variables $M_i$ defined in Theorem \ref{maintheorem}. However, if the noise process for fixed $z$ in $\cz$ is centered Gaussian, for the cases $p=q=2$ and $p=1$ together with $q=+\infty$, this can be done using Rice formulae (see \cite{azais2009level} for a complete overview of Rice formulae).

We will give an explicit lower bound for the probability \eqref{eq:proba_maintheorem}. The lower bound will depend on the parameter $T$ and the number of signals $n = \operatorname{Card}(\spi)$ assumed to be finite here. Thus, we will be able to give a convergence rate towards zero for the prediction error with respect to these parameters.

In order to use tail bounds for the random variables $M_i$, $i\in \{0,1,2\}$, from Theorem \ref{maintheorem}, we state additional assumptions on the noise $W_T$. 
We make the following  assumption on the noise  process $W_T$, where the decay rate  $\Delta_T>0$ controls the noise variance decay as the parameter $T$ grows and $\sigma>0$ is the
intrinsic noise level.

\begin{hyp}[Admissible noise]	\label{hyp:bruit}
	Let $T \in \N$. Assume that the set $\spi$ is finite. The processes $(W_T(\idx), \, \idx \in \spi)$ are independent copies of a noise process $w_T$. The noise process $w_T$ belongs to $H_T$ almost surely and, there exist a noise level $\sigma>0$ and a decay rate $\Delta_T>0$ such that for all $f\in H_T$ the random variable $\langle f,w_T  \rangle_T$  is a centered Gaussian random variable satisfying:
	\begin{equation}
	\label{eq:hyp:bruit}
	\Var \left( \langle f,w_T  \rangle_T \right)\leq \sigma^2 \,
	\Delta_T\,  \norm{f}_T^2.
	\end{equation}
	
\end{hyp}

\subsubsection{The case $p=2$ and $\spi$ finite}
We state a corollary of Theorem \ref{maintheorem} for the specific case where $\nu$ is an atomic measure composed of $n$ atoms and the penalty of the optimization problem \eqref{eq:generalized_lasso} is a mixed $(\ell_1,L^2(\nu))$ norm. The proof is given in Section B of the supplementary material.

We denote by $|\Theta_T|_{\mathfrak{d}_T} $ the diameter of the interval $\Theta_T$ with respect to the Riemannian metric $\mathfrak{d}_T$ associated to the kernel $\ck_T$ and defined in \eqref{eq:def-Riemann-dist-v2}.

\begin{corollary}
	\label{cor:chi2}
	Let $T \in \N$. We fix $p=q=2$.  We assume that $ \operatorname{Card}(\spi) = n < + \infty$ and that the measure $\nu$ is $\nu = \sum_{\idx \in \spi} a_\idx \delta_{z}$ where $\delta_\idx$ denotes a Dirac measure located in $\idx \in \spi$ and $(a_\idx, \idx \in \spi)$ are non-negative real numbers. Assume we observe the random element $Y$ of $L_T$ under the regression model (\ref{eq:model}) with unknown parameters  $B^\star$ in $L^2(\nu, \R^K)$ (which can be identified with $\R^{n\times K}$) and $\vartheta^\star= \left (
	\theta_1^\star,\cdots,\theta_K^\star\right )$ a vector with entries in
	$ \Theta_T$, a compact interval of $\R$,   such that Points \ref{hyp:reg_dic_theorem}-\ref{hyp:existence_certificate_theorem} of Theorem \ref{maintheorem} are satisfied and the noise process $W_T$ satisfies Assumption \ref{hyp:bruit} for a noise level $\sigma>0$ and a decay rate for the noise variance $\Delta_T>0$.
	
	Then, there exist finite positive constants $\mathcal{C}_0$, 
	$\mathcal{C}_1$, $\mathcal{C}_2$, 
	depending on the kernel $\cK_\infty$ defined on $\Theta_\infty$ and on $r$ such that for 
	any $\tau > 1$ and a tuning parameter:
	$$
	\kappa \geq \mathcal{C}_1\sigma\sqrt{\frac{\norm{a}_{\ell_\infty}  \Delta_T \, n }{\nu(\spi)^2}} \left (1 + \sqrt{1 + \frac{\log(\tau)}{n}} \right ),
	$$
	where $\norm{a}_{\ell_\infty} = \max_{\idx \in \spi} |a_\idx|$, we have the following prediction error bound of the estimators $\hat B $ and $\hat{\vartheta}$ defined in
	\eqref{eq:generalized_lasso}:
	\begin{equation}
	\label{eq:cor:chi2}
	\begin{aligned}
	\frac{1}{\nu(\spi)}\norm{\hat B \Phi_{T}(\hat{\vartheta}) -
		B^{\star}\Phi_{T}(\vartheta^{\star}) }_{L_T}^2
	&\leq  \mathcal{C}_0 \,  \sparse \, \nu(\spi) \, \kappa^2,
	\end{aligned}
	\end{equation}
	with  probability larger than $1- \mathcal{C}_2 \left ( \frac{1}{\tau} + \frac{ |\Theta_T|_{\mathfrak{d}_T }F(n)  }{\sqrt{\tau}} \right )$ with  a sequence $F(n) \asymp \sqrt{n}\expp{-n/2}$.
\end{corollary}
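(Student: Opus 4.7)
The plan is to apply Theorem~\ref{maintheorem} and then control each of the three random variables $M_0,M_1,M_2$ in \eqref{def:M} by a Rice-formula tail bound for the supremum of a $\chi^{2}$-type process. Because all hypotheses of Theorem~\ref{maintheorem} are in force, the conclusion \eqref{eq:cor:chi2} already holds on the event $\bigcap_{i=0}^{2}\{M_i\leq\mathcal{C}\kappa\nu(\spi)\}$. It therefore suffices to prove that for the $\kappa$ prescribed in the statement each of the three probabilities $\P(M_i>\mathcal{C}\kappa\nu(\spi))$ is bounded by a constant multiple of $\tfrac{1}{\tau}+\tfrac{|\Theta_T|_{\dT}F(n)}{\sqrt{\tau}}$, and then to absorb the factor $3$ coming from the union bound into $\mathcal{C}_2$.

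First I would expand each $M_i$ explicitly using the atomic structure of $\nu$:
\[
M_i^{2} \;=\; \sup_{\theta\in\Theta_T}\sum_{z\in\spi} a_z\,\bigl\langle W_T(z),\phi_T^{[i]}(\theta)\bigr\rangle_T^{2}.
\]
Under Assumption~\ref{hyp:bruit} the $n$ copies $W_T(z)$ are independent, so for fixed $\theta$ the inner products are independent centered Gaussians with variances bounded by $\sigma^{2}\Delta_T\,\cK_T^{[i,i]}(\theta,\theta)$; by \eqref{eq:formula_K_00} and Assumption~\ref{hyp:close_limit_setting} these diagonal values are uniformly bounded on $\Theta_T$ for $i\in\{0,1,2\}$. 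Consequently each process $Z_i(\theta):=\sum_{z}a_z\langle W_T(z),\phi_T^{[i]}(\theta)\rangle_T^{2}$ is almost surely of class $\cc^{1}$ on the compact interval $\Theta_T$, and its one-dimensional marginals are linear combinations of $n$ independent $\chi^{2}_1$ variables with uniformly controlled coefficients.

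The main step is the Rice estimate. Parameterizing $\Theta_T$ by the arc length of the Riemannian metric $\dT$ normalizes the variance of $\partial_\theta Z_i$, so that the classical one-dimensional Rice formula combined with Markov's inequality applied to the number of upcrossings yields
\[
\P\bigl(\sup_{\theta\in\Theta_T} Z_i(\theta) > u\bigr) \;\leq\; \P\bigl(Z_i(\theta_0) > u\bigr) \;+\; |\Theta_T|_{\dT}\;\sup_{\theta\in\Theta_T} p_{Z_i(\theta)}(u)\,\E\!\left[\bigl|\partial_\theta Z_i(\theta)\bigr|\,\Bigm|\,Z_i(\theta)=u\right].
\]
A Laurent--Massart-type deviation bound for the weighted $\chi^{2}_n$ variable $Z_i(\theta_0)$ controls the first term by $1/\tau$ at the threshold $u=\bigl(\mathcal{C}\kappa\nu(\spi)\bigr)^{2}$ corresponding to the $\kappa$ in the statement; indeed with $\kappa$ as prescribed, $u\asymp\sigma^{2}\norm{a}_{\ve}\Delta_T n\bigl(1+\sqrt{1+\log\tau/n}\bigr)^{2}$, which is exactly the level at which the Laurent--Massart bound gives $1/\tau$. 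For the boundary term, the explicit $\chi^{2}_n$ density evaluated at a level of order $n$ produces, via Stirling's formula, the factor $\sqrt{n}\expp{-n/2}\equiv F(n)$; choosing the Rice level slightly below the pointwise level then provides the $1/\sqrt{\tau}$ improvement via the square-root nature of the additional Chebyshev estimate on the upcrossing count.

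The principal obstacle is the rigorous execution of the Rice step: one must verify joint smoothness of the pair $(Z_i,\partial_\theta Z_i)$ (ensured by Assumption~\ref{hyp:reg-f} combined with \eqref{def:derivatives_kernel}), bound the conditional expectation of $|\partial_\theta Z_i|$ given $Z_i(\theta)=u$ in terms of the kernel derivatives $\cK_T^{[i,j]}$ uniformly on $\Theta_T$ (which is where the proximity bound $\DT\leq L_{2,2}\wedge L_3$ of Assumption~\ref{hyp:close_limit_setting} enters, since it converts kernel control on $\cK_\infty$ into uniform control on $\cK_T$), and show that the $\dT$-parameterization absorbs the locally varying noise geometry into the single geometric factor $|\Theta_T|_{\dT}$. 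Once these three ingredients are in place, a union bound over $i\in\{0,1,2\}$ and a re-labelling of constants completes the proof.
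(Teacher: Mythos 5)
Your proposal follows essentially the same route as the paper: invoke Theorem \ref{maintheorem} and reduce everything to tail bounds for $M_0,M_1,M_2$, obtained via a Rice-type inequality for suprema of (weighted) $\chi^2$ processes on $\Theta_T$, with the pointwise term controlled by a Laurent--Massart-style $\chi^2_n$ deviation bound (giving the $1/\tau$ term), the crossing/density term evaluated via Stirling to produce $F(n)\asymp\sqrt{n}\expp{-n/2}$ together with the Riemannian reparameterization yielding $|\Theta_T|_{\dT}$, the kernel diagonals $\cK_T^{[i,i]}$, $\cK_T^{[i+1,i+1]}$ controlled through Assumptions \ref{hyp:Theta_infini} and \ref{hyp:close_limit_setting}, and a final union bound over $i$ (this is exactly Lemmas \ref{lem:chi2_process}--\ref{lem:bound_chi2} and Section \ref{sec:proof_chi2}). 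The only cosmetic difference is your description of where the $1/\sqrt{\tau}$ comes from: in the paper it arises directly from evaluating the decreasing function $g_n$ at the level $n(1+\sqrt{1+\log(\tau)/n})^2$, rather than from lowering the Rice level and a Chebyshev count of upcrossings, but this does not change the substance of the argument.
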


\begin{rem}[Comparison to the Group-Lasso estimator]
	\label{rem:cor1_comp}
	Assume that the Hilbert space $H_T = \R^T$ is endowed
	with  the Euclidean scalar product and Euclidean norm $\norm{\cdot}_{\ell_2}$.  Let $\spi = \{1,\cdots,n\}$ and let $\nu$ be the counting measure on $\spi$, \emph{i.e.} $\nu = \sum_{k=1}^n \delta_k$. Notice that in this setting $L_T = L^2(\nu,H_T)$ is of finite dimension and can be identified with $\R^{n \times T}$. Assume that the observation $Y \in L_T$ comes from the model \eqref{eq:model} where for any $i \in \{1,\cdots,n\}$, $W_T(i)$ is a Gaussian vector in $\R^T$ with independent entries of variance $\sigma^2$. Assume also that the Gaussian vectors $(W_T(i), \, 1 \leq i \leq n)$ are independent. Thus,  Assumption~\ref{hyp:bruit}  holds  with  an
	equality in \eqref{eq:hyp:bruit} and:
	\[
	\Delta_T=1. 
	\]
	
	We first consider  that the parameters $\vartheta^\star$  are known.  In
	this case,  the model becomes the  classical high-dimensional multiple linear regression
	model  and the  Group-Lasso estimator $\hat{B}_{L}$ can be  used to  estimate $B^\star$
	under     coherence    assumptions     on     the    finite dictionary made of the rows of the matrix $\Phi^\star = \Phi_T(\vartheta^\star) \in \R^{K\times T}$    (see
	\cite{bickel2009simultaneous}). The authors of \cite{lounici2011oracle} showed that the prediction error associated to the Group-Lasso estimator satisfies the bound:
	\begin{equation}
	\label{eq:mse}
	\frac 1{n \, T}\sum_{i=1}^n \|(\hat B_{L}(i) - B^\star(i))\Phi^\star\|_{\ell_2}^2  \lesssim\frac{\sigma^2 \, \sparse}{T} \left ( 1 + \frac{\log(K)}{n} \right ), 
	\end{equation}
with	high   probability, larger than $1 - 1/K^\gamma$ for some positive constant $\gamma > 0$ (note that the roles of $T$ and $n$ are reversed in their paper).   Furthermore,   in    the  case   where
	$B^{\star}$     is    an     unknown    $\sparse$-sparse     mapping,
	$\vartheta^{\star}$ is known  and $\Phi^\star$
	verifies a  coherence property, then  lower  bounds of
	order $  {\sigma^2\, \sparse (1+ \log (K/\sparse)}/{n})/T$ in expected
	value can be established.
	The  non-asymptotic prediction lower bounds
	for the prediction error given in \cite{lounici2011oracle} are for $2s < K$:
	\[
	\inf_{\hat B}\,  \sup_{B^\star \, s- \text{sparse} }
	\E\left[\frac 1{n \, T}\sum_{i=1}^n \|(\hat B(i) - B^\star(i))\Phi^\star\|_{\ell_2}^2\right]\geq C\cdot
	\frac{  \sigma^2\, \sparse}{T} \left (1 + \frac{\log(K/s)}{n} \right) ,
	\]
	where  the infinimum  is  taken over  all  the estimators  $\hat B$
	(measurable  functions of the obervation  $Y$ taking their values in $L^2(\nu,\R^K)$) and for
	some  constant  $C>0$  free  of   $s$, $K$, $n$  and  $T$. 
	\medskip
	
	When  the linear coefficients $B^\star$ and the  parameters
	$\vartheta^\star$ are unknown, Corollary~\ref{cor:chi2} gives an upper bound for  the prediction risk which  is similar to that of the linear case.   Consider    the   estimators    from
	\eqref{eq:generalized_lasso} with $p=2$. Assume that  the Riemannian  diameter of  the set
	$\Theta_T$ is bounded by a constant  free of $T$. By  dividing
	\eqref{eq:cor:chi2}  by $T$,  we obtain  from
	Corollary                     \ref{cor:chi2}                     with:
	\[
	\kappa  =  \mathcal{C}_1\sigma\sqrt{ \frac{1 }n } \left (1 + \sqrt{1 + \frac{\log(\tau)}{n}} \right ) \quad  \text{and}\quad
	\tau  = T^\gamma \quad \text{for  some given } \gamma>0,
	\]
	that  with high  probability,
	larger than $1- C'/T^\gamma - C''F(n)/T^{\gamma/2} $:
	\begin{equation}
	\label{eq:mse2}
	\frac 1{n \, T}\sum_{i=1}^n \norm{\hat B(i)\Phi_{T}(\hat{\vartheta}) -
		B^{\star}(i)\Phi_{T}(\vartheta^{\star}) }_{\ell_2}^2 \lesssim
	\frac{\sigma^2 \, \sparse}{T} \left ( 1 + \frac{\log(T)}{n} \right ) .
	\end{equation}
	We identify two regimes depending on the ratio $\log(T)/n$. Indeed, when $\log(T)/n \gg 1$ the bound \eqref{eq:mse2}   behaves as $\frac{\sigma^2 \, \sparse \log(T)}{n \, T }$ and stands with probability that converges towards $1$ at the rate $F(n)/T^{\gamma/2}$. 	On the contrary,  when $\log(T)/n \ll 1$ the bound \eqref{eq:mse2}   is  of order $\frac{\sigma^2 \, \sparse}{ T }$ and stands with probality that converges towards $1$ at the rate $1/T^{\gamma}$.
\end{rem}
\subsubsection{The case $p=1$ and $\spi$ finite}
We apply Theorem \ref{maintheorem} to the particular case $p=1$. It turns out that for $q=+\infty$, tail bounds for the random variables $M_j$ with $j=0,1,2$ can be established from Rice formulae for smooth Gaussian processes. The following Corollary is proved in Section C of the supplementary material.
\begin{corollary}
	\label{cor:extension}
	Let $T\in \N$. We fix $p=1, q=+\infty$.  We assume that $ \operatorname{Card}(\spi) = n < + \infty$ and that the measure $\nu$ is $\nu = \sum_{\idx \in \spi} a_\idx \delta_{z}$ where $\delta_\idx$ denotes a Dirac measure located in $\idx \in \spi$ and $(a_\idx, \idx \in \spi)$ are non-negative real numbers. Assume we observe the random element $Y$ of $L_T$ under the regression model (\ref{eq:model}) with unknown parameters  $B^\star$ in $ L^2(\nu, \R^K)$ (which can be identified with $\R^{n\times K}$) and $\vartheta^\star= \left (
	\theta_1^\star,\cdots,\theta_K^\star\right )$ a vector with entries in
	$ \Theta_T$, a compact interval of $\R$,   such that Points \ref{hyp:reg_dic_theorem}-\ref{hyp:existence_certificate_theorem} of Theorem \ref{maintheorem} are satisfied and the noise process $W_T$ satisfies Assumption \ref{hyp:bruit} for a noise level $\sigma>0$ and a decay rate for the noise variance $\Delta_T>0$.
	
	Then, there exist finite positive constants $\mathcal{C}_0$, 
	$\mathcal{C}_3$, $\mathcal{C}_4$, 
	depending on the kernel $\cK_\infty$ defined on $\Theta_\infty$ and on $r$ such that for 
	any $\tau > 1$ and a tuning parameter:
	$$
	\kappa \geq \mathcal{C}_3\sigma\sqrt{\Delta_T \log(\tau) } / \nu(\spi),
	$$
	we have the following prediction error bound of the estimators $\hat B $ and $\hat{\vartheta}$ defined in
	\eqref{eq:generalized_lasso}:
	\begin{equation}
	\label{eq:cor:extension}
	\begin{aligned}
	\frac{1}{\nu(\spi)}\norm{\hat B \Phi_{T}(\hat{\vartheta}) -
		B^{\star}\Phi_{T}(\vartheta^{\star}) }_{L_T}^2
	&\leq  \mathcal{C}_0 \,  \sparse \, \nu(\spi)^2 \, \kappa^2,
	\end{aligned}
	\end{equation}
	with  probability larger than $1-   \mathcal{C}_4 \, n \,\left (  \frac{|\Theta_T|_{\mathfrak{d}_T}}{\tau
		\sqrt{\log \tau } }\vee \frac{1}{\tau}\right ) $.
\end{corollary}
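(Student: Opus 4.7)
The plan is to apply Theorem~\ref{maintheorem} with $p=1$, $q=\infty$, and reduce the proof to controlling the tail probabilities in \eqref{eq:proba_maintheorem} for the three random variables
\[
M_i=\sup_{\theta\in\Theta_T}\,\norm{\langle W_T,\phi_T^{[i]}(\theta)\rangle_T}_{L^\infty(\nu)},\qquad i=0,1,2.
\]
Since $\nu=\sum_{z\in\spi}a_z\delta_z$ with $\spi$ finite, $L^\infty(\nu)$ is the maximum over the (at most $n$) atoms of positive mass, so we have the pointwise bound
\[
M_i\leq \max_{z\in\spi}\sup_{\theta\in\Theta_T}\bigl|X_z^{(i)}(\theta)\bigr|,
\qquad X_z^{(i)}(\theta):=\langle W_T(z),\phi_T^{[i]}(\theta)\rangle_T.
\]
By Assumption~\ref{hyp:bruit}, the processes $X_z^{(i)}$ for $z\in\spi$ are independent copies of a smooth centered Gaussian process on the compact interval $\Theta_T$, whose pointwise variance is bounded by $\sigma^2\Delta_T\,\norm{\phi_T^{[i]}(\theta)}_T^2$. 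Under the regularity and proximity assumptions on the kernel (Assumptions~\ref{hyp:Theta_infini}--\ref{hyp:close_limit_setting}) one has $\norm{\phi_T^{[i]}(\theta)}_T^2=\cK_T^{[i,i]}(\theta,\theta)$, which is uniformly bounded by a constant depending only on $\cK_\infty$ (using $\cK_T^{[0,0]}(\theta,\theta)=1$, $\cK_T^{[1,1]}(\theta,\theta)=1$ from \eqref{eq:formula_K_00}, together with the proximity bound for $i=2$).

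The heart of the proof is a Rice-type tail bound for the supremum of each smooth centered Gaussian process $X_z^{(i)}$ on the interval $\Theta_T$, parametrized in the intrinsic Riemannian geometry $\mathfrak{d}_T$ associated to $\cK_T$. Following classical arguments for level crossings of smooth Gaussian processes (see \cite{azais2009level}), one obtains a bound of the form
\[
\P\Bigl(\sup_{\theta\in\Theta_T}|X_z^{(i)}(\theta)|>u\Bigr)\;\lesssim\;
e^{-u^2/(2 v_i^2)}\;+\;\frac{|\Theta_T|_{\mathfrak{d}_T}}{u/v_i}\,e^{-u^2/(2v_i^2)},
\]
with $v_i^2\asymp\sigma^2\Delta_T$; the key point is that the level-crossing rate is governed precisely by the derivative of $X_z^{(i)}$ with respect to the arc-length of $\mathfrak{d}_T$, which, by \eqref{def:derivatives_kernel} and the proximity to the limit kernel, is uniformly controlled. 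Setting $u=c\,\sigma\sqrt{\Delta_T\,\log\tau}$ for a suitable absolute constant $c$ makes $e^{-u^2/(2v_i^2)}\lesssim 1/\tau$ and $u/v_i\asymp\sqrt{\log\tau}$, yielding
\[
\P\Bigl(\sup_{\theta\in\Theta_T}|X_z^{(i)}(\theta)|>u\Bigr)\;\lesssim\;\frac{1}{\tau}\;\vee\;\frac{|\Theta_T|_{\mathfrak{d}_T}}{\tau\sqrt{\log\tau}}.
\]

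To conclude, I would take the union bound over the $n$ independent copies $z\in\spi$ and over $i\in\{0,1,2\}$, which multiplies the above probability by a factor proportional to $n$. Choosing $\kappa=\mathcal{C}_3\sigma\sqrt{\Delta_T\log\tau}/\nu(\spi)$ makes $\mathcal{C}\kappa\nu(\spi)\geq u$ (absorbing the constants from Theorem~\ref{maintheorem} into $\mathcal{C}_3$), and plugging this $\kappa$ into \eqref{eq:main_theorem} gives exactly \eqref{eq:cor:extension}, while \eqref{eq:proba_maintheorem} becomes the claimed probability $1-\mathcal{C}_4\,n\bigl(|\Theta_T|_{\mathfrak{d}_T}/(\tau\sqrt{\log\tau})\vee 1/\tau\bigr)$. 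The main obstacle is the first step of this last block: carefully stating and applying the Rice formula so that the geometric factor is the Riemannian diameter $|\Theta_T|_{\mathfrak{d}_T}$ (rather than the Euclidean length of $\Theta_T$), and verifying that the constants depend only on $\cK_\infty$ and $r$ thanks to Assumption~\ref{hyp:close_limit_setting}.
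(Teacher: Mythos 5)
Your proposal is correct and follows essentially the same route as the paper: a union bound over the $n$ atoms and the three indices $i$, reduction to a Rice-formula tail bound for the supremum of a smooth centered Gaussian process over $\Theta_T$ measured in the metric $\mathfrak{d}_T$ (the paper invokes Lemma A.2 of \cite{butucea22} for exactly this), with the variance controlled via $\cK_T^{[i,i]}(\theta,\theta)$ and Assumption~\ref{hyp:close_limit_setting}, and then plugging the resulting choice of $\kappa$ into Theorem~\ref{maintheorem}. No substantive difference from the paper's argument.
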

\begin{remark}
	When the measure $\nu$ is composed of one atom, that is $n=1$. This result covers that of \cite[Theorem 2.1]{butucea22}.
\end{remark}

\begin{remark}[Comparison to other estimators] Let us set $H_T= \R^T$, $\mathcal{Z} = \{1, \cdots,n\}$, $\nu$ the counting measure and $W_T$ as in Remark \ref{rem:cor1_comp} and assume that the Riemannian  diameter of  the set $\Theta_T$ is bounded by a constant  free of $T$. We recall that in this case $\Delta_T=1$.
	By considering the estimators built from  the optimization problem \eqref{eq:generalized_lasso} with $p=1$ and applying Corollary \ref{cor:extension}, we get with:
	\[
	\kappa  =  \mathcal{C}_3\sigma\sqrt{  \Delta_T \, \log \tau} / n \quad   \text{ and } \tau  = T^{\gamma/2} \quad  \text{for  some given} \quad \gamma>1,
	\]
	that,  with probability,
	larger than $1- C\, n/T^{\gamma/2}$:
	\begin{equation}
	\label{eq:mse3}
	\frac 1{n \, T}\sum_{i=1}^n \norm{\hat B(i)\Phi_{T}(\hat{\vartheta}) -
		B^{\star}(i)\Phi_{T}(\vartheta^{\star}) }_{\ell_2}^2  \lesssim \frac{\sigma^2 \, \sparse \, \log(T)}{T} \cdot
	\end{equation}
	We note that this simultaneous estimation procedure gives the same predictors as estimating separately $n$ signals according to \cite{butucea22}, provided that the design matrix has size $K$ large enough. Separate estimation and aggregation of the bounds give the following bound $\sigma^2 \bar s \log(T)/T$ instead of \eqref{eq:mse3}, where $\bar s$ is the average sparsity of the $n$ signals. The latter bound is smaller, but is of the order $\sigma^2 s \log(T)/T$ when all signals share most of the non-linear parameters.
\end{remark}

\begin{remark}[Comparison for $p=2$ and $p=1$] 
	In Remark \ref{rem:cor1_comp}, we showed that by taking $p=2$ in the optimization problem \eqref{eq:generalized_lasso} defining the estimators $\hat B$ and $\hat \vartheta$, we obtain the bound \eqref{eq:mse2} for a well chosen  tuning parameter $\kappa$.
	When $n$ and $T$ are sufficiently large, we remark that the bound \eqref{eq:mse3} (obtained when $p=1$) is larger than the bound \eqref{eq:mse2} (obtained when $p=2$) established for the estimators from Corollary \ref{cor:chi2} and stands with a smaller probability.

Furthermore, separate estimation of each signal as in \cite{butucea22} and aggregation of the bounds give the following bound $\sigma^2 \bar s \log(T)/T$ instead of \eqref{eq:mse2} (obtained with $p=2$), where $\bar s$ is the average sparsity of the $n$ signals. Provided $\log(T)/n$ is large, this bound is always larger than \eqref{eq:mse2}, but the bounds are of the same order when all signals have disjoint sets of non-linear parameters. 

In conclusion, the Group-Nonlinear-Lasso for $p=2$  provides faster prediction rates than for $p=1$ when all signals share most of the non-linear parameters. 
\end{remark}


\section{Certificates}
\label{sec:certificates}

We present the certificate functions whose existence is required  in Theorem \ref{maintheorem}. Such functions were introduced for exact  reconstruction of signals, see \cite{candes2011probabilistic}, \cite{candes2014towards}, \cite{duval2015exact}.  Exact recovery results for the simultaneous reconstruction of signals via the Group-Nonlinear-Lasso were proved in \cite{golbabaee2020off} using an extension of the certificates from \cite{duval2015exact}.  In \cite{poon2018geometry}, sufficient conditions for the existence of certificate functions were proved for a wide variety of dictionaries. The authors showed that certificates can be built provided the parameters of the features to be retrieved are well separated with respect to a Riemannian metric. This result requires some assumptions on the kernel associated to the dictionary. In particular, the kernel must be  local concave on its diagonal, strictly inferior to $1$ outside the diagonal and smooth. Their construction was used in \cite{butucea22} to establish prediction error bounds under similar assumptions on the dictionary but for a one-dimensional parameter space $\Theta$.

In this paper, we  extend the notion of certificates for our context of multiple reconstructions of signals,  following the work of \cite{golbabaee2020off}. Let us emphasize that we use a different contruction than \cite{golbabaee2020off}, see Remark D.4 of the supplementary material.

\subsection{Assumptions on the certificates} \label{sec:assump_certificates} In this section, we introduce the assumptions on the certificates. We will give later in Section \ref{sec:construction_certificates} an explicit construction and sufficient conditions for these assumptions to hold.

Let $T\in  \N$. We denote the closed ball centered at $\theta\in
\Theta_T$ with radius $r$ by:
\[
\mathcal{B}_T(\theta,r) = \left\{ \theta' \in \Theta_T,\,
\mathfrak{d}_T(\theta, \theta') \leq r\right\}  \subseteq \Theta_T.
\]
Let $r>0$ and let $\cq^{\star}$ be a subset of $\Theta_T$ containing $\sparse$ values. We call  near region of $\cq^\star$ the union of balls $\bigcup\limits_{\theta^\star\in \cq^{\star}} \mathcal{B}_T(\theta^{\star},r)$ and far region the set $\Theta_T$ minus the near region: $\Theta_T \setminus \bigcup\limits_{\theta^\star\in \cq^{\star}} \mathcal{B}_T(\theta^{\star},r)$.

\begin{hyp}[Interpolating certificate]
	\label{assumption1}
	Let $p,q \in [1,+\infty]$ such that $ p \leq q$ and $1/p +1/q = 1$, let $T \in \N$, $s \in \N^*$, $r>0$ and $\cq^{\star}$ be a subset of $\Theta_T$ containing $\sparse$ values. Suppose Assumptions \ref{hyp:reg-f} and \ref{hyp:g>0} on the dictionary $(\varphi_T(\theta), \, \theta \in \Theta)$
	and Assumption \ref{hyp:Theta_infini}  on $\ck_\infty $
	hold.  Suppose that $
	\mathfrak{d}_T(\theta,\theta') > 2r$ for all $\theta, \theta' \in
	\cq^\star \subset \Theta_{T}$. There exist finite positive constants  $C_{N}  , C_{N}', C_{F} $,
	$C_{B}$ with $C_F < 1$, depending on $r$
	and $\cK_\infty$, such that for  any measurable mapping $V: \spi \times \cq^\star \rightarrow \R$ such that for any $\theta^\star \in \cq^\star$, $\norm{V(\cdot,\theta^\star)}_{L^q(\nu)}= 1$, there exists  an element $P \in L^q(\nu, H_T)$ satisfying:
	\begin{propenum} 
		\item\label{it:as1-<1}
		For all $\theta^\star \in \cq^{\star}$ and $\theta\in
		\mathcal{B}_T(\theta^{\star},r)$, we have
		$ \norm{ \langle \phi_{T}(\theta),P \rangle_T}_{L^q(\nu)} \leq 1 -
		C_{N} \, \mathfrak{d}_T(\theta^{\star},\theta)^{2}$.
		\item\label{it:as1-ordre=2}
		For all $\theta^\star \in \cq^{\star}$ and $\theta\in
		\mathcal{B}_T(\theta^{\star},r)$, we have
		$ \norm{ \langle\phi_{T}(\theta),P  \rangle_T - V(\cdot,\theta^\star)}_{L^q(\nu)} \leq
		C_{N}' \, \mathfrak{d}_T(\theta^{\star},\theta)^{2}$.
		\item\label{it:as1-<1-c}  For all $\theta$ in $\Theta_T$, $\theta  \notin \bigcup\limits_{\theta^\star
			\in \cq^{\star}} \mathcal{B}_T(\theta^{\star},r)$ (far region), we
		have $\norm{ \langle\phi_{T}(\theta), P \rangle_T}_{L^q(\nu)} \leq  1 - C_{F}$.
		\item \label{it:norm<c} We have $\norm{P}_{L_T} \leq  C_{B} \, \sqrt{s} \, \nu(\spi)^{\frac{1}{2p}-\frac{1}{2q}}$.
	\end{propenum}
\end{hyp}
We call ``interpolating certificate" the real-valued functions $(\idx,\theta) \mapsto \left \langle \phi_T(\theta), P(\idx)\right \rangle_T$ defined on ${\spi \times \Theta}$ where $P$ is an element of $L^q(\nu, H_T)$ satisfying Points $\ref{it:as1-<1}-\ref{it:norm<c}$ from \ref{assumption1}.

We emphazise the interpolating properties of those certificates by noticing that for any $\theta^\star \in \cq^\star$ we have from Point $\ref{it:as1-ordre=2}$ for $\nu$-almost every $\idx \in \spi$ that:
\[
\left \langle \phi_T(\theta^\star), P(\idx)\right \rangle_T = V(\idx,\theta^\star).
\] 

In order to establish prediction error bounds another type of certificate functions having different interpolating properties will be needed, see \cite{candes2013super}, \cite{tang2014near}, \cite{boyer2017adapting} in this direction.
\begin{hyp}[Interpolating derivative certificate]
	\label{assumption2}
	Let $p,q \in [1,+\infty]$ such that $ p \leq q$ and $1/p +1/q = 1$, let $T \in \N$, $s \in \N^*$, $r>0$ and $\cq^{\star}$ be a subset of $\Theta_T$ containing	$\sparse$ values.  Suppose  Assumption \ref{hyp:reg-f} and \ref{hyp:g>0} on the dictionary $(\varphi_T(\theta), \, \theta \in \Theta)$
	and Assumption \ref{hyp:Theta_infini}  on $\ck_\infty $
	hold.  Suppose that $
	\mathfrak{d}_T(\theta,\theta') > 2r$ for all $\theta, \theta' \in \cq^\star \subset \Theta_{T}$. There exist finite positive constants $c_{N}, c_{F} $, $c_{B}$  depending on $r$
	and $\cK_\infty$  such that for  any measurable mapping $V: \spi \times \cq^\star \rightarrow \R$ such that for any $\theta^\star \in \cq^\star$, $\norm{V(\cdot, \theta^\star)}_{L^q(\nu)}= 1$, there exists  an element $Q \in L^q(\nu, H_T)$ satisfying:
	\begin{propenum}
		\item\label{it:as2-ordre=2}
		For all $\theta^\star \in \cq^{\star}$ and $\theta\in
		\mathcal{B}_T(\theta^{\star},r)$, we have:
		\[
		\norm{ \langle\phi_{T}(\theta),Q \rangle_T - V(\cdot,\theta^\star)\, \operatorname{sign}(\theta-\theta^{\star}) \, \mathfrak{d}_T(\theta,\theta^{\star})}_{L^q(\nu)} \leq
		c_{N}\, \mathfrak{d}_T(\theta^{\star},\theta)^{2}.
		\]	
		\item\label{it:as2-<1-c}  For all $\theta$ in $\Theta_T$ and  $\theta  \notin \bigcup\limits_{\theta^\star
			\in \cq^{\star}} \mathcal{B}_T(\theta^{\star},r)$ (far region), we
		have $\norm{\langle\phi_{T}(\theta),Q\rangle_T}_{L^q(\nu)} \leq c_{F}$.
		\item \label{it:as2-<c} We have $||Q||_{L_T} \leq  c_{B} \, \sqrt{s}\, \nu(\spi)^{\frac{1}{2p}-\frac{1}{2q}}$.
	\end{propenum}
\end{hyp}
We call ``interpolating  derivative certificate" the real-valued functions defined on $\spi \times \Theta$ by $(\idx,\theta) \mapsto \left \langle \phi_T(\theta), Q(\idx)\right \rangle_T$ where $Q$ is an element of $L^q(\nu, H_T)$ satisfying Points $\ref{it:as2-ordre=2}-\ref{it:as2-<c}$ from \ref{assumption2}.

We remark that for any $\theta^\star \in \cq^\star$ we deduce from Point $\ref{it:as2-ordre=2}$ for $\nu$-almost every $\idx \in \spi$:
\[
\left \langle \phi_T(\theta^\star), Q(\idx)\right \rangle_T = 0.
\]

\medskip

Let us remark that when $\nu$ is a Dirac measure, the norm $\norm{\cdot}_{L^q(\nu)}$ reduces to an absolute value and Assumptions \ref{assumption1} and \ref{assumption2} correspond to Assumptions 6.1 and 6.2 of \cite{butucea22}.

\medskip  

In the following, we shall often write by a slight abuse of notation $f(\theta)$ for $f(\cdot,\theta)$ when considering a function $f$ from $\spi \times \Theta$ to $\R$.


\subsection{Construction of the certificates}
\label{sec:construction_certificates}
We give in this section sufficient conditions for Assumptions \ref{assumption1} and \ref{assumption2} to hold. These assumptions rely on the existence of real-valued functions defined on $\spi \times \Theta$ called   certificates and of the form:
\[
(\idx,\theta) \mapsto \left \langle \phi_T(\theta), P(\idx)\right \rangle_T, 
\]
where $P$ is an element of $L^q(\nu, H_T)$ satisfying some properties.

We shall follow the construction  from \cite[Theorem 2]{poon2018geometry} for interpolating certificates and generalize the contruction of \cite[Lemma 2.7]{candes2013super} for interpolating derivative certificates. In \cite[Lemma 2.7]{candes2013super}, the authors consider  certificates that are trigonometric polynomials whereas we are interested here in a more general framework. Furthermore, we remark that  the constructions aforementioned only cover the case where  $\nu$ is a Dirac measure whereas $\nu$ can be any finite positive measure in our framework. 

Once built, we will then show that our  certificates satisfy the properties required in Assumptions \ref{assumption1} and \ref{assumption2}. The proofs of the results of this section will generalize the proofs of \cite[Propositions 7.4 and 7.5]{butucea22} in order to cover the case where $\nu$ is a finite measure instead of a Dirac measure (\emph{i.e.} only one signal).

\medskip

We consider bounded kernels locally concave on the
diagonal. We shall also require the kernels to be strictly less than $1$ outside their diagonal. In order to state these properties clearly, we define for $T\in \bar \N=\N\cup\{\infty\}$ and $r>0$:

\begin{align}
\label{eq:def-e0}
\varepsilon_{T}(r)
& = 1 - \sup \left\{ |\cK_{T}(\theta,\theta')|;
\quad \theta,\theta'\in \Theta_T  \text{ such that }
\mathfrak{d}_{T}(\theta',\theta) \geq r
\right\},\\
\label{eq:def-e2}
\nu_{T}(r)
&= -\sup \left\{
\cK_{T}^{[0,2]}(\theta,\theta'); \quad 
\theta,\theta'\in \Theta_T  \text{ such that }
\mathfrak{d}_{T}(\theta',\theta) \leq r \right\}.
\end{align}
The quantities $\varepsilon_{T}(r)$ and $\nu_{T}(r)$ defined from the considered kernel $\ck_T$ and the set $\Theta_T$ will have to be positive for some $r>0$. The positivity may be difficult to show when $T\in \N$. In order to show the positivity of $\varepsilon_{T}(r)$ and $\nu_{T}(r)$, one can rather show the positivity of $\varepsilon_{\infty}(r)$ and $\nu_{\infty}(r)$ derived from an approximating kernel easier to handle and use \cite[Lemma 7.1]{butucea22}.

We define the set $\Theta_{T , \delta}^s\subset \Theta_T ^s$
of  vector of parameters of dimension $s\in
\N^*$ and  separation $\delta>0$ as:
\begin{equation}
\label{eq:def-set}
\Theta^s_{T , \delta}= \Big  \{  (\theta_1,\cdots,\theta_s) \in \Theta_T ^s\,
\colon\,  \mathfrak{d}_{T}(\theta_{\ell},\theta_{k}) >  \delta \text{  for
	all distinct } k, \ell\in \{1, \ldots, s\}   \Big \}.
\end{equation}

Let us define for $i,j=0, 1, 2$ (assuming the kernel
$\cK_T$ is smooth enough) and  $ \vartheta = (\theta_1, \ldots,
\theta_\sparse)\in \Theta_{T}^\sparse$ the $s\times s$ matrix:
\begin{equation}
\label{def:gamma-ii}
\cK_T^{[i,j]}(\vartheta) = \left
(\cK_T^{[i,j]}(\theta_{k},\theta_{\ell}) \right
)_{1\leq k,\ell \leq s}. 
\end{equation}
Let $I$ be the  identity matrix of size $s\times s$.

Using  the  convention  $\inf  \emptyset=+\infty $,
We define:
\begin{equation}
\label{eq:def-delta-rs2}
\delta_T(u,s) =    \inf \Big\{ \delta>0\, \colon\, A_{T, \ell_\infty }(\vartheta)  \leq u,
\vartheta  \in \Theta^s_{T , \delta} \Big\}, 
\end{equation}
where:
\begin{multline}
\label{eq:def-MT}
A_{T, \ell_\infty }(\vartheta)=
\max\left(
\norm{I - \cK_T^{[0,0]}(\vartheta) }_{\op,\ell_\infty},
\norm{I - \cK_T^{[1,1]}(\vartheta) }_{\op,\ell_\infty},
\norm{I+\cK_T^{[2,0]}(\vartheta) }_{\op,\ell_\infty},\right.\\ 
\left. 
\norm{\cK_T^{[1,0]}(\vartheta) }_{\op,\ell_\infty} , \norm{\cK_T^{[0,1]}(\vartheta) }_{\op,\ell_\infty},
\norm{\cK_T^{[1,2]}(\vartheta) }_{\op,\ell_\infty}  
\right),
\end{multline}
and $\norm{\cdot}_{\op,\ell_\infty}$ denotes the operator norm associated to the sup-norm $\norm{\cdot}_{\ell_\infty}$, that is for a matrix $A \in \R^{s\times s}$, 
\[
\norm{A}_{\op,\ell_\infty} = \sup_{x \in \R^s, \norm{x}_{\ell_\infty} \leq 1} \norm{Ax}_{\ell_\infty}.
\]

We    define     quantities    which    depend    on  $\cK_\infty$, $\Theta_\infty$ and on real parameters $r>0$
and $\rho \geq 1$:
\begin{equation}
\label{eq:def_H}
\begin{aligned}
H_{\infty}^{(1)}(r,\rho)
&= \inv{2} \wedge L_{2,0} \wedge L_{2,1}  \wedge
\frac{\nu_{\infty}(\rho r)}{10} \wedge
\frac{\varepsilon_{\infty}(r/\rho) }{10},\\ 
H_{\infty}^{(2)} (r,\rho)
&= \frac{1}{6} \wedge \frac{8\,\varepsilon_{\infty} (r/\rho) }{10(5 + 2
	L_{1,0})}  \wedge 
\frac{8\,\nu_{\infty}(\rho r)}{9 ( 2L_{2,0} + 2 L_{2,1} + 4)} ,
\end{aligned}
\end{equation} 
where  the constants $L_{i,j}$ are defined  in
\eqref{def:M_h_M_g}.    

We give sufficient conditions for Assumption \ref{assumption1} to hold. The proof of the following result is given in Section D.1 of the supplementary material.
\begin{prop}[Interpolating certificate]
	\label{prop:certificat_interpolating}
	Let  $T \in  \N$,  $s  \in  \N^{*}$, $\rho \geq 1$, $r>0$ and $p,q \in [1,+\infty]$ such that $ p \leq q$ and $1/p +1/q = 1$. We  assume that: 
	\begin{propenum}
		\item \textbf{Regularity of the dictionary
			$\varphi_T$:} \label{hyp:theorem_certificate_regularity}
		Assumptions \ref{hyp:reg-f} and \ref{hyp:g>0} hold.

		\item \label{hyp:theorem_certificate_concavity}
		\textbf{Regularity of the limit kernel $\ck_\infty$:}
		Assumption~\ref{hyp:Theta_infini} holds, we have $r \in \left (0,1/\sqrt{2L_{2,0}} \right )$, and also $\varepsilon_{\infty}(r/\rho)  > 0$ and $\nu_{\infty}(\rho r)   > 0$.
		\item \label{hyp:theorem_certificate_separation}{\bf Separation of the non-linear parameters:}
		There exists $u_{\infty} \in \left
		(0,H_{\infty}^{(2)}(r,\rho) \right ) $ such that:
		\[
		\delta_{\infty}(u_{\infty},s) < + \infty.
		\]
		\item \label{hyp:theorem_certificate_metric}
		\textbf{Closeness of the
			metrics $\mathfrak{d}_T$ and  $\mathfrak{d}_\infty$:}
		We have $\rho_T \leq \rho$.
		\item \label{hyp:theorem_certificate_approximation}
		\textbf{Proximity of
			the kernels $\cK_T$ and $\cK_\infty $:} 
		\[
		\mathcal{V}_T \leq H_{\infty}^{(1)}(r,\rho)
		\quad\text{and}\quad  (s-1) \mathcal{V}_T \leq H_{\infty}^{(2)}(r,\rho)-u_{\infty}.
		\]
	\end{propenum}
	Then,  with the positive constants:
	\begin{equation}
	\label{eq:cstes-certif1}
	C_N = \frac{\nu_{\infty}(\rho r)}{180},
	\quad
	C_N'=\frac{5}{8}L_{2,0} + \frac{1}{8}L_{2,1} + \frac{1}{2},
	\quad
	C_B = 2
	\quad\text{and}\quad
	C_F = \frac{\varepsilon_{\infty}(r/\rho)}{10} \leq 1, 
	\end{equation}
	Assumption~\ref{assumption1} holds (with the same $r$) for any subset
	$\cq^{\star}=\{\theta^\star _i,  \, 1\leq i\leq s\}$ such that for all
	$\theta\neq \theta'\in \cq^{\star}$: 
	\[
	\dT(\theta, \theta')> 2 \, \max(r, \rho_T   \,
	\delta_{\infty}(u_{\infty},s))  .
	\]
\end{prop}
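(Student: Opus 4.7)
The plan is to adapt to the multi-signal setting the construction of interpolating certificates from \cite[Proposition 7.4]{butucea22}, which handles the case when $\nu$ is a Dirac measure. For $\nu$-almost every $\idx \in \spi$ I will take the ansatz
\[
P(\idx) = \sum_{k=1}^{s} \alpha_k(\idx)\, \phi_T(\theta_k^\star) + \beta_k(\idx)\, \tD_{1;T}[\phi_T](\theta_k^\star),
\]
and choose the measurable maps $\alpha,\beta \colon \spi \to \R^s$ so as to enforce the interpolation conditions $\langle P(\idx),\phi_T(\theta_k^\star)\rangle_T = V(\idx,\theta_k^\star)$ and $\langle P(\idx),\tD_{1;T}[\phi_T](\theta_k^\star)\rangle_T = 0$ for all $k=1,\dots,s$. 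By \eqref{def:derivatives_kernel} and \eqref{eq:formula_K_00}, this is a $2s\times 2s$ linear system pointwise in $\idx$ with blocks built from $\cK_T^{[0,0]}(\vartheta^\star)$, $\cK_T^{[0,1]}(\vartheta^\star)$, $\cK_T^{[1,1]}(\vartheta^\star)$ and right-hand side $(V(\idx,\cdot),0)$.

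\textbf{Invertibility and coefficient bounds.} First I would check that assumptions \ref{hyp:theorem_certificate_separation}, \ref{hyp:theorem_certificate_metric} and \ref{hyp:theorem_certificate_approximation} jointly give
\[
A_{T,\ell_\infty}(\vartheta^\star) \leq A_{\infty,\ell_\infty}(\vartheta^\star) + (s-1)\mathcal{V}_T \leq u_\infty + \bigl(H_\infty^{(2)}(r,\rho)-u_\infty\bigr) < 1,
\]
so that the interpolation matrix is invertible via a Neumann series uniformly in $\idx$. Since $V$ is measurable in $\idx$, so are $\alpha$ and $\beta$, with a pointwise $\ell_\infty$-control $\max_k(|\alpha_k(\idx)|\vee|\beta_k(\idx)|) \lesssim \max_k|V(\idx,\theta_k^\star)|$. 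Integrating against $\nu$ and using $\|V(\cdot,\theta_k^\star)\|_{L^q(\nu)}=1$ then bounds $\|\alpha_k\|_{L^q(\nu)}$ and $\|\beta_k\|_{L^q(\nu)}$ by an absolute constant.

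\textbf{Near and far region estimates.} For any $\theta \in \Theta_T$,
\[
\langle \phi_T(\theta),P(\idx)\rangle_T = \sum_{k=1}^s \alpha_k(\idx)\,\cK_T(\theta,\theta_k^\star) + \beta_k(\idx)\,\tD_{0,1;T}[\cK_T](\theta,\theta_k^\star).
\]
Near a given $\theta_{k_0}^\star$, I would Taylor-expand each summand in the Riemannian distance $\mathfrak{d}_T$, using the normalizations \eqref{eq:formula_K_00} and the definition of $\nu_T(\rho r)$; the cross-terms $k\neq k_0$ are absorbed by the separation, and the triangle inequality in $L^q(\nu)$ then delivers \ref{it:as1-<1} and \ref{it:as1-ordre=2} with the constants $C_N$ and $C_N'$ of \eqref{eq:cstes-certif1}. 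In the far region I would use $\sup|\cK_T| \leq 1-\varepsilon_T(r/\rho)$, transferred from $\varepsilon_\infty(r/\rho)$ via \cite[Lemma 7.1]{butucea22} and $\mathcal{V}_T$, combined again with the $L^q(\nu)$-coefficient bound, to obtain \ref{it:as1-<1-c} with $C_F=\varepsilon_\infty(r/\rho)/10$.

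\textbf{Norm bound and main obstacle.} The norm $\|P(\idx)\|_T^2$ is a quadratic form in $(\alpha(\idx),\beta(\idx))\in\R^{2s}$ whose Gram matrix is exactly the interpolation matrix of step 2 and is therefore close to the identity by the same separation argument. Bounding this quadratic form by $2\sum_k(|\alpha_k(\idx)|^2 + |\beta_k(\idx)|^2)$, integrating over $\nu$, and applying the Hölder inequality $\|f\|_{L^2(\nu)} \leq \nu(\spi)^{1/(2p)-1/(2q)}\|f\|_{L^q(\nu)}$ (valid since $1/p+1/q=1$ gives $1/2-1/q = 1/(2p)-1/(2q)$) yields \ref{it:norm<c} with $C_B=2$. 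The main obstacle is the precise bookkeeping of constants: one has to verify that the numerical thresholds baked into $H_\infty^{(1)}(r,\rho)$ and $H_\infty^{(2)}(r,\rho)$ exactly accommodate the Taylor remainders, the Neumann series tail, and the transfer from $\cK_\infty$ to $\cK_T$ through $\mathcal{V}_T$ and $\rho_T$, so that each of $C_N$, $C_N'$, $C_B$, $C_F$ lands on the explicit value stated in \eqref{eq:cstes-certif1}.
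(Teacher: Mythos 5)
Your construction is the one the paper uses: the ansatz $P(\idx)=\sum_{k}\alpha_k(\idx)\,\phi_T(\theta^\star_k)+\coeff_k(\idx)\,\tD_{1;T}[\phi_T](\theta^\star_k)$ with interpolation conditions imposed for $\nu$-a.e.\ $\idx$, invertibility of the $2s\times 2s$ interpolation matrix from the separation hypotheses via $u_T(s)=u_\infty+(s-1)\mathcal{V}_T<1/2$, the far-region bound through $\varepsilon_T(r)\geq\varepsilon_\infty(r/\rho)-\mathcal{V}_T$, the near-region bounds through a control of the second covariant derivative combined with the quadratic-decay lemma (Lemma \ref{quadratic_decay}), and the Gram-matrix/H\"older argument for $\norm{P}_{L_T}$. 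However, one step fails as you state it: the passage from the pointwise control $\max_k(|\alpha_k(\idx)|\vee|\coeff_k(\idx)|)\lesssim\max_k|V(\idx,\theta_k^\star)|$ to $\norm{\alpha_k}_{L^q(\nu)}\lesssim 1$ by ``integrating against $\nu$''. Integrating that bound only yields $\norm{\alpha_k}_{L^q(\nu)}\lesssim\norm{\max_j|V(\cdot,\theta_j^\star)|}_{L^q(\nu)}$, and this is \emph{not} controlled by $\max_j\norm{V(\cdot,\theta_j^\star)}_{L^q(\nu)}=1$: for finite $q$ (e.g.\ the key case $q=2$, with the $V(\cdot,\theta_j^\star)$ supported on disjoint parts of $\spi$) it can be of order $s^{1/q}$. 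That loss would propagate into every item of Assumption \ref{assumption1}: the constants in \ref{it:as1-<1}--\ref{it:as1-<1-c} would acquire an $s$-dependence and \ref{it:norm<c} would degrade to order $s$ rather than $\sqrt{s}$, so the step must be replaced, not merely tightened.

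The repair is precisely the paper's (small but essential) new ingredient relative to \cite{butucea22}: because $\vartheta^\star$ does not depend on $\idx$, the coefficients are a \emph{fixed} linear image of $V$, $\alpha(\idx)=M\,\overline V(\idx)$ with $M$ an $\idx$-independent matrix of bounded $\ell_\infty$ operator norm, so one should take $L^q(\nu)$ norms coordinate-wise and use the triangle inequality in $L^q(\nu)$: $\norm{\alpha_k}_{L^q(\nu)}\leq\sum_j|M_{kj}|\,\norm{V(\cdot,\theta_j^\star)}_{L^q(\nu)}\leq\norm{M}_{\op,\ell_\infty}$. This is formalized by the mixed norm $\norm{f}_{*,q}=\max_k\norm{f_k}_{L^q(\nu)}$ and Lemmas \ref{lem:equality_operator_norm} and \ref{lem:norm-inequality} (the operator norm on $(L^q(\nu,\R^s),\norm{\cdot}_{*,q})$ coincides with $\norm{\cdot}_{\op,\ell_\infty}$), after which the scalar estimates of \cite{butucea22} carry over verbatim with $|\cdot|$ replaced by $\norm{\cdot}_{L^q(\nu)}$ and produce exactly the constants in \eqref{eq:cstes-certif1}. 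With that substitution (and making the near-region step explicit through Lemma \ref{quadratic_decay}, which is what delivers the factor $1/180$ in $C_N$), your argument coincides with the paper's proof.
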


We state a second result that gives sufficient conditions for Assumption \ref{assumption2} to hold. The proof is given in Section D.2 of the supplementary material.
\begin{prop}[Interpolating derivative certificate]
	\label{prop:certificat2}
	Let $T \in  \N$, $s \in \N^{*}$ and $p,q \in [1,+\infty]$ such that $ p \leq q$ and $1/p +1/q = 1$.  We assume  that:
	\begin{propenum}
		\item   \label{hyp:theorem_certificate_2_regularity} \textbf{Regularity
			of the dictionary $\varphi_T$:}
		Assumptions~\ref{hyp:reg-f} and~\ref {hyp:g>0}  hold.

		\item \label{hyp:theorem_certificate_2_regularity_kernel}
		\textbf{Regularity of the limit kernel $\ck_\infty$:}
		Assumption~\ref{hyp:Theta_infini}  holds.
		
		\item  \label{hyp:theorem_certificate_2_separation} \textbf{Separation of the non-linear parameters:} There exists $u'_{\infty} \in (0, 1/6)$,
		such that:
		\[
		\delta_{\infty}(u'_{\infty},s) < + \infty. 
		\]
		\item  \label{hyp:theorem_certificate_2_approximation}
		\textbf{Proximity of
			the kernels $\cK_T$ and $\cK_\infty $:} 
		We have:
		\[
		\mathcal{V}_T \leq 1
		\quad\text{and}\quad 
		(s-1) \mathcal{V}_T + u' _\infty \leq 1/6.
		\]
	\end{propenum}
	Then,  with the positive constants:
	\begin{equation}
	\label{eq:cstes-certif2}
	c_N = \frac{1}{8}L_{2,0} + \frac{5}{8}L_{2,1}  + \frac{7}{8},
	\quad
	c_B=2 
	\quad\text{and}\quad
	c_F  = \frac{5}{4}L_{1,0} +\frac{7}{4} ,
	\end{equation}
	Assumption~\ref{assumption2} holds for  any $r>0$ and   any subset
	$\cq^{\star}=\{\theta^\star _i,  \, 1\leq i\leq s\}$ such that for all
	$\theta\neq \theta'\in \cq^{\star}$:
	\[
	\dT(\theta, \theta')> 2 \, \max(r, \rho_T   \,
	\delta_{\infty}(u'_{\infty},s))  .
	\]
\end{prop}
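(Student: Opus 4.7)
The plan is to construct $Q$ explicitly as an $L^q(\nu)$-weighted linear combination of the unit-normalized feature $\phi_T(\theta_k^\star)$ and its Riemannian first derivative $\tD_{1;T}[\phi_T](\theta_k^\star) = \partial_\theta \phi_T(\theta_k^\star)/\sqrt{g_T(\theta_k^\star)}$ at the $s$ points of $\cq^\star$:
\[
Q(z) = \sum_{k=1}^{s} \alpha_k(z)\, \phi_T(\theta_k^\star) + \beta_k(z)\, \tD_{1;T}[\phi_T](\theta_k^\star).
\]
I would fix the $L^q(\nu)$-valued coefficients by imposing, for every $i \in \{1,\ldots,s\}$ and $\nu$-a.e. $z$, the interpolation/derivative conditions $\langle \phi_T(\theta_i^\star), Q(z)\rangle_T = 0$ and $\tD_{1;T}\bigl[\theta\mapsto\langle\phi_T(\theta), Q(z)\rangle_T\bigr](\theta_i^\star) = V(z,\theta_i^\star)$. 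By \eqref{def:derivatives_kernel} and \eqref{eq:formula_K_00}, this is the $z$-independent block linear system
\[
\begin{pmatrix}\cK_T^{[0,0]}(\vartheta^\star) & \cK_T^{[0,1]}(\vartheta^\star) \\ \cK_T^{[1,0]}(\vartheta^\star) & \cK_T^{[1,1]}(\vartheta^\star) \end{pmatrix}\begin{pmatrix} \alpha(z)\\ \beta(z) \end{pmatrix}=\begin{pmatrix}0 \\ V(z,\cdot)\end{pmatrix}.
\]

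I would invert this system by a Neumann series. In view of \eqref{eq:formula_K_00} the coefficient matrix is $I+E$, and combining Hypothesis \ref{hyp:theorem_certificate_2_approximation} with the comparison estimate $A_{T,\ell_\infty}(\vartheta^\star)\leq (s-1)\DT + A_{\infty,\ell_\infty}(\vartheta^\star)$ (argued as in \cite[Lemma 7.1]{butucea22}) gives $\|E\|_{\op,\ell_\infty}\leq (s-1)\DT+u'_\infty \leq 1/6$, hence $\|(I+E)^{-1}-I\|_{\op,\ell_\infty}\leq 1/5$. Since $\|V(\cdot,\theta_i^\star)\|_{L^q(\nu)}=1$, one reads off $\max_k\|\alpha_k\|_{L^q(\nu)}\leq 1/5$ and $\max_k\|\beta_k\|_{L^q(\nu)}\leq 6/5$; measurability and $L^q$-regularity of $\alpha_k,\beta_k$ are automatic because the inverse matrix is deterministic, so each coefficient is a fixed linear combination of the entries of $V(\cdot,\theta_i^\star)$.

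I would then verify the three conclusions of Assumption \ref{assumption2} in turn. For Point \ref{it:as2-<c}, I expand $\|Q(z)\|_T^2$ via the Gram matrix and exploit $\cK_T^{[1,0]}(\theta,\theta)=0$ together with near-identity control to obtain $\|Q(z)\|_T^2 \lesssim \sum_k(\alpha_k(z)^2+\beta_k(z)^2)$. Integrating against $\nu$ and using the embedding $\|\cdot\|_{L^2(\nu)}\leq \nu(\spi)^{1/2-1/q}\|\cdot\|_{L^q(\nu)}$ produces the announced factor $\nu(\spi)^{1/(2p)-1/(2q)}$ and the constant $c_B=2$. For Point \ref{it:as2-ordre=2}, I Taylor-expand $\theta\mapsto \langle\phi_T(\theta), Q(z)\rangle_T$ to second order in the Riemannian variable around $\theta_i^\star$; the identities $\cK_T^{[0,0]}(\theta^\star,\theta^\star)=1$, $\cK_T^{[1,0]}(\theta^\star,\theta^\star)=0$, $\cK_T^{[1,1]}(\theta^\star,\theta^\star)=1$, $\cK_T^{[2,0]}(\theta^\star,\theta^\star)=-1$, $\cK_T^{[2,1]}(\theta^\star,\theta^\star)=0$ cancel the constant and linear parts, leaving exactly $V(z,\theta_i^\star)\operatorname{sign}(\theta-\theta_i^\star)\dT(\theta,\theta_i^\star)$ plus a quadratic remainder bounded using the $L_{2,j}$ constants, the coefficient bounds, and the $\DT$-closeness, yielding $c_N=\tfrac18 L_{2,0}+\tfrac58 L_{2,1}+\tfrac78$. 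For Point \ref{it:as2-<1-c}, in the far region I bound $|\langle\phi_T(\theta),Q(z)\rangle_T|$ directly by the triangle inequality with $|\cK_T^{[0,0]}|\leq 1$ and $|\cK_T^{[0,1]}|\leq L_{1,0}+\DT$; taking $L^q(\nu)$-norms produces $c_F=\tfrac54 L_{1,0}+\tfrac74$.

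The main obstacle will be the quantitative bookkeeping through the Neumann inversion and the Taylor remainder in order to hit the exact constants prescribed in \eqref{eq:cstes-certif2}, carefully tracking the interplay of the diagonal identities \eqref{eq:formula_K_00} and the off-diagonal perturbations controlled by $(s-1)\DT+u'_\infty$. A secondary subtlety, absent from the single-signal proof of \cite[Proposition 7.5]{butucea22}, is the need to verify that solving the deterministic $z$-independent linear system preserves the $L^q(\nu)$-regularity of the right-hand side $V(\cdot,\theta_i^\star)$; this is essentially immediate but must be stated. Note that, unlike for Proposition \ref{prop:certificat_interpolating}, no concavity quantities $\varepsilon_\infty(\cdot)$ or $\nu_\infty(\cdot)$ are needed here, because the derivative certificate only needs to vanish at the anchor points rather than achieve a strict maximum there.
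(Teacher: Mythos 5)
Your construction coincides with the paper's: the same ansatz $Q=P_{\alpha,\coeff}$ built on $\phi_T(\theta_k^\star)$ and $\tD_{1;T}[\phi_T](\theta_k^\star)$, the same interpolation conditions \eqref{system1_bis}--\eqref{system2_bis}, and the same three verifications via the kernel identities and Lemma~\ref{quadratic_decay}. The gap is in the inversion step. Your claim $\norm{E}_{\op,\ell_\infty}\leq (s-1)\mathcal{V}_T+u'_\infty\leq 1/6$ for the full $2s\times 2s$ matrix is not justified: the quantity $A_{T,\ell_\infty}(\vartheta^\star)$ in \eqref{eq:def-MT} controls each block ($I-\cK_T^{[0,0]}$, $\cK_T^{[0,1]}$, $\cK_T^{[1,0]}$, $I-\cK_T^{[1,1]}$) \emph{separately} by $u'_T(s):=u'_\infty+(s-1)\mathcal{V}_T$, whereas a row of the full block matrix $E$ concatenates a row of $I-\cK_T^{[0,0]}(\vartheta^\star)$ with a row of $\cK_T^{[0,1]}(\vartheta^\star)$, so the best you can assert is $\norm{E}_{\op,\ell_\infty}\leq 2u'_T(s)\leq 1/3$. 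The Neumann series still converges, but it then only yields $\norm{\alpha}_{*,q}\leq \norm{(I+E)^{-1}-I}_{\op,*}\leq 1/2$ and $\norm{\coeff}_{*,q}\leq 3/2$, not your $1/5$ and $6/5$.

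This is not a cosmetic issue, because the proposition asserts the specific constants \eqref{eq:cstes-certif2}. Propagating the corrected bounds through your own computations gives roughly $\tfrac14 L_{2,0}+\tfrac34 L_{2,1}+\tfrac76$ in place of $c_N$, $\tfrac32 L_{1,0}+\tfrac73$ in place of $c_F$, and for Point~\ref{it:as2-<c} only $\norm{Q}_{L_T}\leq \sqrt{35/6}\,\sqrt{s}\,\nu(\spi)^{1/2p-1/2q}\approx 2.42\,\sqrt{s}\,\nu(\spi)^{1/2p-1/2q}$, so $c_B=2$ is lost. The paper gets the sharper, asymmetric coefficient bounds $\norm{\alpha}_{*,q}\leq u'_T(s)/(1-2u'_T(s))\leq 1/4$ and $\norm{\coeff}_{*,q}\leq (1-u'_T(s))/(1-2u'_T(s))\leq 5/4$ by inverting blockwise through the Schur complement $\Gamma_{SC}=\Gamma^{[0,0]}-\Gamma^{[1,0]\top}[\Gamma^{[1,1]}]^{-1}\Gamma^{[1,0]}$ (Lemma~\ref{lem:ab-controle_2}, via \cite[Lemma 10.2]{butucea22}), exploiting that the right-hand side has a vanishing first block; these are exactly the bounds that produce \eqref{eq:cstes-certif2}. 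To repair your argument you should replace the global Neumann bound by this blockwise inversion (or accept strictly larger constants, which would not prove the statement as written). Two further small points: in the near and far regions the triangle inequality must be organized around the closest $\theta^\star_\ell$ and the operator-$\ell_\infty$ norms of the matrices $\Gamma_{\ell,\theta}$ built on $\vartheta^\star_{\ell,\theta}$, as in \eqref{eq:def_D2}; bounding every $|\cK_T(\theta,\theta^\star_k)|$ by $1$ term by term would give an $s$-dependent $c_F$. Your observation that no concavity quantities $\varepsilon_\infty$, $\nu_\infty$ are needed here, and the remark on the $z$-independence of the system guaranteeing $L^q(\nu)$-measurability of $\alpha,\coeff$, are both correct and consistent with the paper.
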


The assumptions of Proposition \ref{prop:certificat_interpolating} (resp.  \ref{prop:certificat2}) are identical to those of \cite[Proposition 7.4 ]{butucea22} (resp. \cite[Proposition 7.5]{butucea22}). It is not surprising since those results are based on the same construction of certificates. In order to build a certificate $\eta : (\idx ,\theta) \mapsto \R$ satisfying Assumption \ref{assumption1} or \ref{assumption2}, we shall build for every element $\idx \in \spi$ certificate functions $\eta_\idx(\theta) \mapsto \R$ following the  same construction as in \cite{butucea22} and set $\eta(\idx,\theta) = \eta_\idx(\theta)$.
The functions $\eta_\idx$ will be coupled through interpolated values on $\cq^\star$.

\section{Proof of Theorem \ref{maintheorem}}
\label{sec:proofsmaintheorem}
In this section, we sketch the proof of Theorem \ref{maintheorem}.
We extend the proof of \cite[Theorem 2.1]{butucea22} to the case of a finite measure $\nu$ that is not necessarily a Dirac measure. When compared to the (now) standard proofs for Group-Lasso, this proof has the major difficulty that the design matrix is not observed but parametrized by an unknown sparse large vector $\vartheta^\star$. A Taylor expansion of second order using the metric induced by the statistical model at hand is used. Moreover, the coherence assumptions are here replaced by the existence of interpolating functions, the so called certificates.

We decompose the risk over values of estimated non-linear parameters $\hat \theta_\ell$ which are in a neighborhood of the true values $\theta^\star_k$ and those which are far away. Linear functionals of the noise depending on some $\theta \in \Theta_T$ appear in the bounds and we use probabilistic tail bounds on the suprema of these functionals over all possible values of $\theta$. 
Let us bound the squared prediction error: $$\hat{R}_T^2 := \frac{1}{{\nu(\spi)}}\norm{\hat{B}\Phi_{T}(\hat{\vartheta}) - B^{\star}\Phi_{T}(\vartheta^{\star}) }^2_{L_T}.$$  The predicition error corresponds to the integration on $\spi$ of the prediction error for one signal.

By definition \eqref{eq:generalized_lasso} of $\hat{B}$ and
$\hat{\vartheta}$ for the tuning parameter $\kappa$,
we have:
\begin{equation}
\frac{1}{2\nu(\spi)}\norm{Y - \hat{B}\Phi_T(\hat{\vartheta})}_{L_T}^2 + \kappa \|\hat{B}\|_{\ell_1, L^p(\nu)} \leq \frac{1}{2\nu(\spi)}\norm{Y - B^\star\Phi_T(\vartheta^\star)}_{L_T}^2 + \kappa \|B^\star\|_{\ell_1, L^p(\nu)}.
\label{eq:optim}
\end{equation}
We define the linear mapping $\hat{\Upsilon} $ from $L_T$ to $\R$ by:
\[
\hat{\Upsilon}( F)= \left \langle \hat{B}
\Phi_{T}(\hat{\vartheta})-
B^{\star}\Phi_{T}(\vartheta^{\star}),F \right \rangle_{L_T}.
\]
This gives,  by rearranging terms  and using  the equation of  the model
$Y = B^\star\Phi_T(\vartheta^\star) + W_T$, that:
\begin{equation}
\label{eq:optim_bis}
\frac{1}{2}\hat{R}_T^2
\leq \frac{1}{\nu(\spi)}\hat{\Upsilon} (W_T) + \kappa \left ( \|B^{\star}\|_{\ell_1, L^p(\nu)} -
\|\hat{B}\|_{\ell_1, L^p(\nu)}\right ). 
\end{equation}
Next,  we  shall  expand  the  two  terms on  the  right-hand  side  of
\eqref{eq:optim_bis}. Recall the subset $\cq^\star=\{ \theta^\star _k\, \colon\,  k\in S^\star\}$ of $\Theta_T$  is the set of the active non-linear parameters of the model. In the
rest of the proof, we fix  $r>0$ so that Assumptions \ref{assumption1} and
\ref{assumption2} are verified for $\cq^{\star}$ and bound them from above. 

In particular, for all
$k\neq k' $ in the support $ S^\star = \{k, \, \norm{B_{k}^{\star}}_{L^2(\nu)} \neq 0 \}$,  we         have
$\mathfrak{d}_T(\theta_k^\star,\theta_{k'}^\star) > 2r$.

Let us define the following sets  of indices:
\begin{itemize}
	\item [-]$\hat{S} = \left\{\ell : \norm{\hat{B}_{\ell}}_{L^p(\nu)} \neq 0\right\}$ the support set of $\hat{B}$ given by the optimization problem \eqref{eq:generalized_lasso};
	\item [-]$\tilde{S}_{k}(r) = \left \{\ell\in \hat{S}: \mathfrak{d}_T(\hat{\theta}_{\ell},\theta_{k}^{\star}) \leq r \right \}$ the set of indices $\ell$ in the support of $\hat B$ associated to the active parametric functions having $\hat \theta_\ell$ close to the true parameter $\theta_k^{\star}$, for a fixed $k$ in $S^\star$;		
	\item [-] $\tilde{S}(r) =  \bigcup_{k \in S^\star} \tilde{S}_{k}(r) $  the set of indices $\ell$ in the support of $\hat B$ associated to the active parametric functions having $\hat \theta_\ell$ close to any true parameter $\theta_k^{\star}$, for some $k$ in $S^\star$.
\end{itemize}
Since      the       closed      balls
$\mathcal{B}_T(\theta^{\star}_{k  },r)$  with   $k  \in  S^{\star}$  are
pairwise disjoint, the  sets $\tilde{S}_k(r)$, for $k  \in S^\star$, are
also pairwise disjoint and one can write the following decomposition with $\tilde{S}(r)^{c} = \{1,\cdots,K\} \setminus \tilde{S}(r)$:
\begin{equation*} \label{eq:decomposition_balls}
\begin{aligned}
\hat{B}\Phi_{T}(\hat{\vartheta}) - B^{\star}\Phi_{T}(\vartheta^{\star}) & =  \sum\limits_{k=1}^{K} \hat{B}_{k}\phi_{T}(\hat{\theta}_{k}) - \sum\limits_{k\in S^{\star}}B_{k}^{\star}\phi_{T}(\theta^{\star}_{k}) \\
&=  \sum\limits_{k \in S^{\star}, \tilde{S}_{k}(r) \neq \emptyset} \sum\limits_{\ell \in \tilde{S}_{k}(r) }\hat{B}_{\ell}\phi_{T}(\hat{\theta}_{\ell}) + \sum\limits_{k \in \tilde{S}(r)^{c}}\hat{B}_{k}\phi_{T}(\hat{\theta}_{k}) -  \sum\limits_{k\in S^{\star}}B_{k}^{\star}\phi_{T}(\theta^{\star}_{k}).
\end{aligned}
\end{equation*} 

This  decomposition  groups  the   elements  of  the  predicted  mixture
according to the proximity of the estimated parameter $\hat \theta_\ell$
to a true underlying parameter $\theta_k^\star$ to be estimated. We 
use   a   Taylor-type   expansion  with   the   Riemann   distance
$\mathfrak{d}_T$ for  the function $\phi_T(\theta)$ around  the elements
of $\cq^\star$. By assumption, the function $\phi_T$ is
twice continuously differentiable with  respect to the variable $\theta$
and  the  function  $g_T$ is  positive  on
$\Theta_T$.  We  recall the notation 
$ \phi_{T}^{[i]}=\tilde{D}_{i;T}[\phi_{T}] $ for  $i\in \{0, 1, 2\}$.  According
to   \cite[Lemma 4.2]{butucea22}, we have for  any $\theta_k^{\star}$ and
$\hat{\theta}_{\ell}$ in $\Theta_T$:
\[
\phi_{T}(\hat{\theta}_\ell) = \phi_{T}(\theta_k^{\star}) +
\operatorname{sign}(\hat{\theta}_\ell-\theta_k^{\star})\,
\mathfrak{d}_T(\hat{\theta}_\ell,\theta_k^{\star})\,
\phi_{T}^{[1]}(\theta_k^{\star}) +
\mathfrak{d}_T(\hat{\theta}_\ell,\theta_k^{\star})^2 \,  \int_0^1 (1-s)
\phi_{T}^{[2]}(\gamma_s^{(k\ell)}) \, \rd s, 
\]
where $\gamma^{(k\ell)}$ is a distance realizing geodesic path belonging to $\Theta_T$ such that $\gamma_0^{(k\ell)} = \theta_{k}^{\star}$, $\gamma_1^{(k\ell)}= \hat \theta_\ell$ and $\mathfrak{d}_T(\hat \theta_\ell,\theta_k^\star) = \int_{0}^1   | \dot\gamma_s^{(k\ell)}| \sqrt{g_T(\gamma_s^{(k\ell)})} \rd s$. 
Hence we obtain:
\begin{multline}
\label{eq:decomp_taylor}
\hat{B}\Phi_{T}(\hat{\vartheta}) -
B^{\star}\Phi_{T}(\vartheta^{\star})
=    \sum\limits_{k \in S^{\star}} I_{0,k}(r)\, 
\phi_{T}(\theta_{k}^{\star}) + \sum\limits_{k \in S^{\star}} I_{1, k}
(r)\, \phi_{T}^{[1]}(\theta_k^{\star})
+ \sum\limits_{\ell \in \tilde{S}(r)^{c}}\hat{B}_{\ell} \, \phi_{T}(\hat{\theta}_{\ell })\\ 
\quad+ \sum\limits_{k \in S^{\star}} \left( \sum\limits_{\ell \in \tilde{S}_{k}(r) } \hat{B}_{\ell}  \, \mathfrak{d}_T(\hat{\theta}_{\ell},\theta_k^{\star})^2 \, \int_0^1 (1-s) \phi_{T}^{[2]}(\gamma_s^{(k\ell)}) \, \rd s  \right ),
\end{multline}
with
\begin{equation*}
\label{eq:def-I0-I1}
I_{0, k}(r)= \left (\sum\limits_{\ell \in \tilde{S}_{k}(r) } \hat{B}_{\ell} \right )- B_{k}^{\star} 
\quad\text{and}\quad
I_{1, k} (r)= \sum\limits_{\ell \in \tilde{S}_{k}(r) } \hat{B}_{\ell} \, \operatorname{sign}(\hat{\theta}_{\ell}-\theta_k^{\star})\, \mathfrak{d}_T(\hat{\theta}_{\ell},\theta_k^{\star}). 
\end{equation*}

We note that $ I_{0, k}(r)$ and $ I_{1, k}(r)$ are functions of $z$ that belong to $L^2(\nu)$. We shall omit the dependence in $r$ and in $z$ when there is no ambiguity. 
Let us moreover denote by:
\begin{align}
I_0(r)
&= \sum\limits_{k \in S^{\star}} \norm{I_{0,k}(r)}_{L^p(\nu)}
\qquad \text{ and } \qquad
I_1(r)  = \sum\limits_{k \in S^{\star}} \norm{I_{1,k}(r)}_{L^p(\nu)}, \nonumber\\
\label{eq:I2}  
I_{2,k}(r)
&= \sum\limits_{\ell \in \tilde{S}_{k}(r) } \norm{\hat{B}_{\ell}}_{L^p(\nu)}
\mathfrak{d}_T(\hat{\theta}_{\ell},\theta_k^{\star})^2
\qquad \text{ and }\qquad
I_2(r)  = \sum\limits_{k \in S^{\star}} I_{2,k}(r),\\
\label{eq:I3}  
I_3(r) & = \sum\limits_{\ell \in \tilde{S}(r)^{c}} \norm{\hat{B}_{\ell}}_{L^p(\nu)} = \left \|\hat{B}_{\tilde{S}(r)^{c}}\right
\|_{\ell_1, L^p(\nu)},
\end{align}
where $\hat{B}_{\tilde{S}(r)^{c}}$ denotes the restriction of the vector-valued mapping $\hat{B}$ to its components in the set of indices $\tilde{S}(r)^{c}$. Again, we omit the dependence in $r$ when there is no ambiguity.

Let us bound now the difference $\norm{B^{\star}}_{\ell_1, L^p(\nu)}-\norm{\hat{B}}_{\ell_1, L^p(\nu)}$, see \eqref{eq:optim_bis}, by using Lemma A.1 of the supplementary material: 
\begin{eqnarray}
\norm{B^{\star}}_{\ell_1, L^p(\nu)}-\norm{\hat{B}}_{\ell_1, L^p(\nu)}
&=& \sum\limits_{k \in S^{\star}} \Big (\norm{B_{k}^\star}_{L^p(\nu)} -
\sum\limits_{\ell \in \tilde{S}_{k}(r) } \norm{\hat{B}_{\ell}}_{L^p(\nu)} \Big )
- \sum\limits_{k \in \tilde{S}(r)^{c}} \norm{\hat{B}_{k}}_{L^p(\nu)} \nonumber\\
&\leq &
I_0 \leq 
 C_{N}' I_2 + (1 - C_{F})I_3 + | \hat{\Upsilon}(P_1) |, \label{eq:diff_l1_norm} 
\end{eqnarray}
where the positive constants $C_N', \, C_F<1$ are given in Assumption~\ref{assumption1} and $P_1\in H_T$ corresponds to the certificate $P$ therein with $V$  given in Lemma A.1 of the supplementary material.

We give next an upper bound for $ | \hat{\Upsilon}(W_T) | = |\langle \hat{B}\Phi_{T}(\hat{\vartheta}) - B^{\star}\Phi_{T}(\vartheta^{\star})   , W_T\rangle_{L_T}|$ in \eqref{eq:optim_bis}. First, we use the expansion \eqref{eq:decomp_taylor} and Hölder's inequality and we get as an example for the first term:
\begin{align*}
   |\langle  \sum\limits_{k \in S^{\star}} I_{0,k}(r)\, 
\phi_{T}(\theta_{k}^{\star}) , W_T \rangle_{L_T}|
&\leq  \sum\limits_{k \in S^{\star}}|\langle  I_{0,k}(r)\, 
\phi_{T}(\theta_{k}^{\star})  , W_T \rangle_{L_T}| \\
&\leq 
 \sum\limits_{k \in S^{\star}} \|  I_{0,k}(r)\|_{L^p(\nu)} \cdot 
\| \langle \phi_{T}(\theta_{k}^{\star}),W_T \rangle_T\|_{L^q(\nu)}\\
&\leq I_0(r) \cdot \sup_{\theta \in \Theta_T} \| \langle \phi_{T}(\theta),W_T \rangle_T\|_{L^q(\nu)} = I_0 \cdot M_0,
\end{align*}
where the random variables $M_i$ for $i \in \{0,1,2\}$ are defined in \eqref{def:M}. 
We proceed similarly for the remaining terms to get that: \begin{align}
\nonumber 
| \hat{\Upsilon}(W_T) |
&\leq (I_0 + I_3) M_{0} + I_1 M_{1} + I_2 \, 2^{-1} \, M_{2}\\
&\leq (C_{N}' I_{2 } + (2 - C_{F})I_3 + |\hat{\Upsilon}(P_1)|) M_{0}
+ (c_{N}I_2 + c_{F}I_3 + |\hat{\Upsilon}(Q_0)|)M_{1} + I_2 \,
2^{-1} \, M_{2},
\label{eq:w.diff}
\end{align}
where we also applied Lemmas A.1 and A.2 of the supplementary material, with the positive constants $C_N',\, C_F, \,c_N, \, c_F$ given in Assumptions~\ref{assumption1} and \ref{assumption2} and $Q_0\in L_T$ corresponds to the derivative certificate $Q$ in Assumption~\ref{assumption2} with $V$ given in Lemma A.2 of the supplementary material.
By reinjecting \eqref{eq:diff_l1_norm} and \eqref{eq:w.diff} in \eqref{eq:optim_bis} one gets:
\begin{multline*}
\frac{1}{2}\hat{R}_T^2
\leq I_2\left (\frac{C_{N}'M_{0} + c_{N}M_{1} + 2^{-1} M_{2}}{\nu(\spi)} + \kappa C_{N}' \right ) + I_3 \left(
\frac{(2 - C_{F})M_{0}+ c_{F} M_{1}}{\nu(\spi)} + \kappa (1 - C_{F}) \right )\\ 
+|\hat{\Upsilon}(P_1)| \left (\frac{M_{0}}{\nu(\spi)}+ \kappa \right) + |\hat{\Upsilon}(Q_0)|\frac{M_{1}}{\nu(\spi)}\cdot  
\end{multline*}
We define the events:
\begin{equation}
\label{eq:events_A}
\mathcal{A}_i = \left \{ \frac 1{ \nu(\spi)} M_{i} \leq  \mathcal{C} \, \kappa \, \right \},
\, \quad\text{for $i \in \{0,1,2\}$}
\quad \text{and}\quad
\mathcal{A} = \mathcal{A}_0 \cap \mathcal{A}_1 \cap \mathcal{A}_2,
\end{equation}
where:
$
\mathcal{C}=  \frac {C_F }{2 ( 2-C_F + c_F)} \wedge
\frac{C_N}{ 2  ( C'_{N} + c_{N} + 2^{-1})}\cdot 
$
Using Lemma A.5 of the supplementary material, we obtain an upper bound
for the prediction error on the event $\mathcal{A}$:
\begin{equation}
\label{eq:bound_A_square}
\hat{R}_T^2 \leq \mathcal{C}''  \, \kappa \, (|\hat{\Upsilon}(P_0)| + |\hat{\Upsilon}(P_1)| + |\hat{\Upsilon}(Q_0)|),
\end{equation}
with  $P_0\in H_T$ corresponding to the certificate $P$ in Assumption~\ref{assumption1} with $V$ given by (50) of the supplementary material and:
$
\mathcal{C}'' = 4{\mathcal{C}'} \left(1+
\frac{\mathcal{C}'}{C_N}( 2 C_N'+c_N+1)
+ \frac{\mathcal{C}'}{C_F}(3-2C_F + c_F)
\right) \quad \text{and} \quad \mathcal{C}' = \mathcal{C} \vee 1.
$
Using the Cauchy-Schwarz inequality and the definition of
$\hat{\Upsilon}$, we get that  for $ f \in L_T$:
$
\label{eq:functional_bound}
|\hat{\Upsilon}(f) |\leq \hat{R}_T \, \sqrt{\nu(\spi)} \, \norm{f}_{L_T}.$

Using Assumption~$\ref{assumption1}~\ref{it:norm<c}$ for $P_i$ with $i=1,2$,
and Assumption~$\ref{assumption2}~\ref{it:as2-<c}$ for $Q_0$,  we get:
$
\label{eq:bounds_pq}
\norm{P_i}_{L_T}  \leq C_B\sqrt{\sparse}\nu(\spi)^{1/2p-1/2q}
\quad \text{and}\quad
\norm{Q_0}_{L_T} \leq c_B\sqrt{\sparse}\nu(\spi)^{1/2p-1/2q}.
$
Plugging this in \eqref{eq:bound_A_square}, we get that on the event $\mathcal{A}$:
$
\label{eq:bound_prediction_squared}
\hat{R}_T^2 \leq\sqrt{ \mathcal{C}_0} \, \kappa \hat{R}_T \,  \sqrt{\sparse}\, \nu(\spi)^{\frac{1}{p}}
\quad\text{with}\quad \mathcal{C}_0 = (c_B+2C_B)^2 \mathcal{C}''^2.
$
We obtain \eqref{eq:main_theorem} on the event $\mathcal{A}$ defined in \eqref{eq:events_A} whose probability writes as in \eqref{eq:proba_maintheorem}.

%
%

\begin{acks}[Acknowledgments]
This work was partially supported by the ANRT grant
N°2019/1260 and the grant Investissements d’Avenir (ANR11-IDEX0003/Labex Ecodec/ANR-11-LABX-00). 
The authors are grateful to the associate editor and the referees for their useful comments.
\end{acks}
%

\begin{supplement}
All proofs not included in this text can be found in the supplementary material \cite{butuceaSupplement}.
\end{supplement}


\bibliographystyle{imsart-nameyear} 
\bibliography{ref.bib}       



\end{document}